\documentclass{article}
\usepackage[utf8]{inputenc}
\usepackage{hyperref}
\usepackage{listings, color}
\usepackage{tabularx}
\usepackage{booktabs} 
\usepackage{bm}

\usepackage{tabularx}
\usepackage{caption}
\usepackage{wrapfig}
\usepackage{adjustbox}
\usepackage{multirow}

\captionsetup[table]{position=bottom}

\usepackage{mwe}
\usepackage[]{graphicx}
\usepackage{caption}
\usepackage{tikz}
\usepackage{tikz-cd}

\addtolength{\oddsidemargin}{-.825in}
\addtolength{\evensidemargin}{-.825in}
\addtolength{\textwidth}{1.65in}

\addtolength{\topmargin}{-.875in}
\addtolength{\textheight}{1.75in}
 
\usepackage[T1]{fontenc}    %
\usepackage{hyperref}
\usepackage{url}            %
\usepackage{booktabs}       %
\usepackage{amsfonts}       %
\usepackage{nicefrac}       %
\usepackage{microtype}      %

\usepackage{graphicx}
\usepackage{subfig}
\usepackage{booktabs} %
\usepackage{caption}

\usepackage{framed}
\usepackage{amssymb}
\usepackage{amsfonts}
\usepackage{mathrsfs}
\usepackage{changepage}

\usepackage{mathtools}
\usepackage{array}
\usepackage{amsthm}
\usepackage{verbatim} 
\usepackage{enumerate}
\usepackage{bbm}
\usepackage{commath}
\usepackage{wrapfig}
\usepackage{amsbsy}
\usepackage{float}
\usepackage{amsmath}
\usepackage{algorithm}
\usepackage{tabularx}

\usepackage{enumitem}
\usepackage{lipsum}

\usepackage{listings}
\usepackage{xcolor}
\definecolor{codegreen}{rgb}{0,0.6,0}
\definecolor{codegray}{rgb}{0.5,0.5,0.5}
\definecolor{codepurple}{rgb}{0.58,0,0.82}
\definecolor{backcolour}{rgb}{0.95,0.95,0.92}
\lstdefinestyle{mystyle}{
    backgroundcolor=\color{white},
    commentstyle=\color{codegreen},
    keywordstyle=\color{magenta},
    numberstyle=\tiny\color{codegray},
    stringstyle=\color{codepurple},
    basicstyle=\ttfamily\footnotesize,
    breakatwhitespace=false,         
    breaklines=true,                 
    captionpos=b,                    
    keepspaces=true,                 
    numbers=left,                    
    numbersep=5pt,                  
    showspaces=false,                
    showstringspaces=false,
    showtabs=false,                  
    tabsize=2
}

\lstset{style=mystyle}

\usepackage{ssArxiv}

\usepackage{amsmath,amsfonts,bm}
\usepackage{amsthm}
\usepackage{array, multirow}
\usepackage{adjustbox} 
\usepackage{thmtools}
\usepackage{thm-restate}
\DeclareMathOperator*{\argmin}{arg\,min}
\newtheorem{prop}{Proposition}

\newtheorem{cor}{Corollary}

\newtheorem{thm}[prop]{Theorem}

\newtheorem{principle}{Principle}

\usepackage{algorithm}
\usepackage[noend]{algpseudocode}
\algdef{SE}[SUBALG]{Indent}{EndIndent}{}{\algorithmicend\ }%
\algtext*{Indent}
\algtext*{EndIndent}

\newcommand{\RNum}[1]{\uppercase\expandafter{\romannumeral #1\relax}}
\usepackage{wrapfig}

\newcommand{\samecolorfootnote}[1]{\textsuperscript{\textcolor{darkred}{#1}}}

\author{
  Sharut Gupta$^\dagger$\samecolorfootnote{\textsuperscript{*}}, Joshua Robinson$^\dagger$\samecolorfootnote{\textsuperscript{*}},
  Derek Lim$^\dagger$,
  Soledad Villar$^\ddagger$,
  Stefanie Jegelka$^\dagger$  \\
  $^\dagger$MIT CSAIL,  $^\ddagger$Johns Hopkins University\\
  \texttt{\{sharut, joshrob, dereklim, stefje\}@mit.edu} \\
  \texttt{svillar3@jhu.edu}\\
}

\newcommand{\care}{\textsc{Care}\xspace}
\newcommand{\carefullform}{\textbf{C}ontrastive \textbf{A}ugmentation-induced \textbf{R}otational \textbf{E}quivariance}

\usepackage[utf8]{inputenc}
\usepackage[english]{babel}
\usepackage[authoryear]{natbib}

\title{Structuring Representation Geometry with \\ Rotationally Equivariant Contrastive Learning}

\begin{document}

\maketitle

\begin{abstract}
Self-supervised learning converts raw perceptual data such as images to a compact space where simple Euclidean distances measure meaningful variations in data. In this paper, we extend this formulation by adding additional geometric structure to the embedding space by enforcing transformations of input space to correspond to simple (i.e., linear) transformations of embedding space. Specifically, in the contrastive learning setting, we introduce an \emph{equivariance} objesctive and theoretically prove that its minima forces augmentations on input space to correspond to \emph{rotations} on the spherical embedding space. We show that merely combining our equivariant loss with a non-collapse term results in non-trivial  representations, without requiring invariance to data augmentations. Optimal performance is achieved by also encouraging approximate invariance, where input augmentations correspond to small rotations. Our method, \care: \textbf{C}ontrastive \textbf{A}ugmentation-induced \textbf{R}otational \textbf{E}quivariance, leads to improved performance on downstream tasks, and ensures sensitivity in embedding space to important variations in data (e.g., color) that standard contrastive methods do not achieve. Code is available at \href{https://github.com/Sharut/CARE}{\texttt{https://github.com/Sharut/CARE}}.
\end{abstract}

\footnotetext{Equal contribution.}

\section{Introduction}
It is only partially understood what structure neural network representation spaces should possess in order to enable intelligent behavior to efficiently emerge  \citep{ma2022principles}. One known key ingredient is to learn low-dimensional spaces in which simple Euclidean distances effectively measure the similarity between data.  A standout success of recent years has been the development of powerful methods for achieving this at web-scale using self-supervision \citep{simclr,schneider2021wav2vec,radford2021learning}. However, many use cases require the use of richer structural relationships that similarities between data cannot capture. One example that has enjoyed considerable success is the encoding of relations between objects (\emph{X is a parent of Y}, \emph{A is a treatment for B}) as simple transformations of embeddings (e.g., translations), which has driven learning with knowledge graphs \citep{bordes2013translating,sun2019rotate,yasunaga2022deep}. But similar capabilities have been notably absent from existing self-supervised learning recipes.

Recent contrastive self-supervised learning approaches have explored ways to close this gap by ensuring representation spaces are sensitive to certain transformations of input data (e.g., variations in color) \citep{dangovski2021equivariant,devillers2022equimod,garrido2023self,bhardwaj2023steerable}. Encouraging sensitivity is especially important in contrastive learning, as it is known to learn shortcuts that forget features that are not needed to solve the pretraining task \citep{robinson2021can}. 
This line of work formalizes sensitivity in terms of \emph{equivariance}: transformations of input data correspond to predictable transformations in representation space. Equivariance requires specifying a family of transformations $a \in \mathcal A$ in the input space, a corresponding transformation $T_a$ in representation space and training $f$ so that $f(a(x)) \approx T_a f(x)$. A typical choice of $T_a$ is a learnable feed-forward network, which acts non-linearly on embeddings \citep{devillers2022equimod,garrido2023self}. This approach has the disadvantage of encoding the relation between the embeddings of $x$ and $a(x)$ in a complex and hard to interpret manner. It also suffers from geometric pathologies, such as inconsistency under compositions: $T_{a_2 \circ a_1}f(x) \neq T_{a_2} T_{a_1}f(x)$.

To address these concerns we propose \care, an equivariant contrastive learning framework that learns to translate augmentations  in the input space (such as cropping, blurring, and jittering) into simple \emph{linear} transformations in feature space. Here, we use the sphere as our feature space (the standard space for contrastive learning), so we specifically consider transformations that are isometries of the sphere: rotations and reflections, i.e., orthogonal transformations. As orthogonal transformations are (intentionally) less expressive than prior non-linear formulations, our learning problem is more constrained and prior approaches for learning non-linear transforms do not apply (see Section \ref{sec: main method}). \care trains  $f$ to preserve angles, i.e., $f(a(x))^\top f(a(x')) \approx f(x)^\top f(x')$, a property that must hold if $f$ is orthogonally equivariant. We show that achieving low error on this seemingly weaker property also implies approximate equivariance and enjoys consistency under compositions. Critically,  we can easily integrate \care into contrastive learning workflows since both operate by comparing pairs of data. 

The key contributions of this work include: 

\begin{itemize}
    \item Introducing \care, a novel equivariant contrastive learning framework that trains transformations (cropping, jittering, blurring, etc.) in input space to approximately correspond to local orthogonal transformations in representation space. 
    \item Theoretically proving and empirically demonstrating that \care places an orthogonally equivariant structure on the embedding space.     \item Showing that \care increases sensitivity to features (e.g., color) compared to invariance-based contrastive methods, and  also improves performance on image recognition tasks.
\end{itemize}

\begin{figure}[t]
     \centering
     \includegraphics[width = \textwidth]{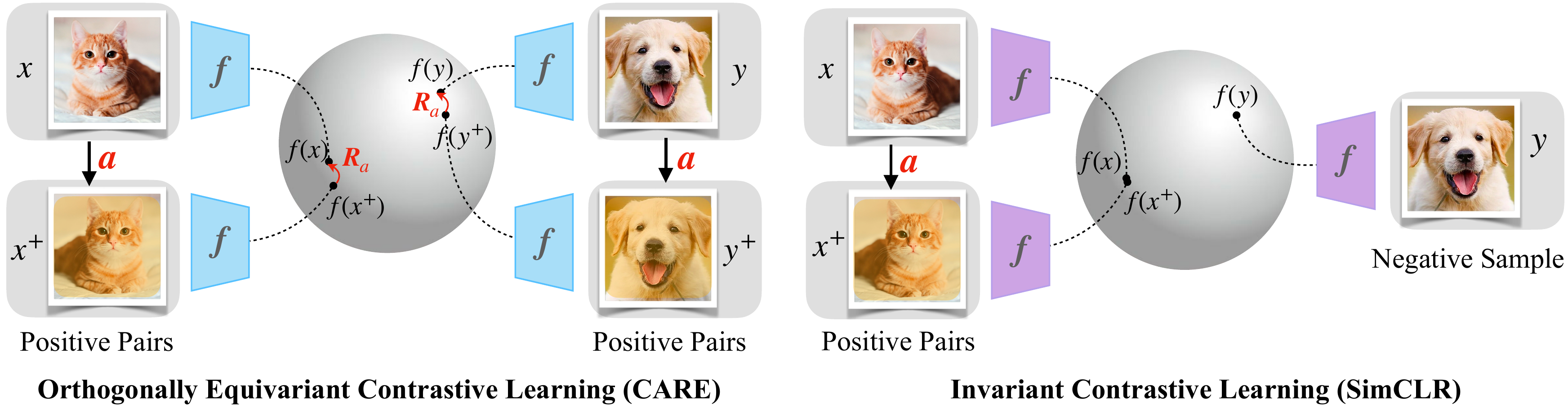}
     \caption{\care is an equivariant contrastive learning approach that trains augmentations (cropping, blurring, etc.) of input data to correspond to orthogonal transformations of embedding space.}
     \label{fig:main_figure}
\end{figure}

\section{Rethinking how augmentations are used in self supervised learning}\label{sec: inv and equi in SSL}

Given access only to samples from a marginal distribution $p(x)$ on some input space 
$\mathcal X$ such as images, the goal of representation learning is commonly to train a feature extracting model $f : \mathcal X \rightarrow \mathbb{S}^{d-1}$ mapping to the unit sphere $\mathbb{S}^{d-1} = \{ z \in \mathbb{R}^d: \| z \|_2 = 1\}$. A common strategy to automatically generate supervision from the data is to additionally introduce a space of augmentations $\mathcal A$, containing maps $a:\mathcal X \rightarrow \mathcal X$ which slightly perturb inputs $\bar{x}$ (blurring, cropping, jittering, etc.). Siamese self-supervised methods learn representation spaces that reflect the relationship between the embeddings of $x = a(\bar{x})$ and $x^+ = a^+(\bar{x})$, commonly by training $f$ to be invariant or equivariant to the augmentations in the input space \citep{chen2021exploring}. 

\paragraph{Invariance to augmentation.} One approach is to train $f$ to embed $x$ and $x^+$ nearby---i.e., so that $f(x) = f(x^+)$ is \emph{invariant} to augmentations. The InfoNCE loss \citep{oord2018representation,gutmann2010noise} used in contrastive learning achieves precisely this:
\begin{equation}\label{eqn: InfoNCE loss}
\mathcal L_{\text{InfoNCE}}(f) = \mathbb{E}_{x,x^+,\{x_i^-\}_{i=1}^N} \bigg [ - \log \frac{e^{f(x)^\top f(x^+) /\tau}}{e^{f(x)^\top f(x^+) / \tau} + \sum_{i=1}^N e^{f(x)^\top f(x_i^-) / \tau}}\bigg ],  
\end{equation} 
where $\tau>0$  is a temperature hyperparameter, and $x^-_i \sim p$ are negative samples from the marginal distribution on $\mathcal X$. As noted by \cite{wang2020understanding}, the contrastive training mechanism balances  invariance to augmentations with a competing objective: uniformly distributing embeddings over the sphere, which rules out trivial solutions such as constant functions. 

Whilst contrastive learning has produced considerable advances in large-scale learning \citep{radford2021learning}, several lines of work have begun to probe the fundamental role of invariance in contrastive learning. Two key conclusions of recent investigations include: 1) invariance limits the expressive power of features learned by $f$, as it removes information about features or transformations that may be relevant in fine-grained tasks \citep{lee2021improving,xie2022should}, and 2) contrastive learning actually benefits from not having exact invariance. For instance, a  critical role of the projection head is to expand the feature space so that $f$ is not fully invariant \citep{jing2021understanding}, suggesting that it is preferable for the embeddings of $x$ and $x^{+}$ to be close, but not identical.
\paragraph{Equivariance to augmentation.} To address the limitations of invariance, recent work has additionally proposed to control \emph{equivariance} (i.e., sensitivity) of $f$ to data transformations \citep{dangovski2021equivariant,devillers2022equimod,garrido2023self}. Prior works can broadly be viewed as training a set of features $f$ (sometimes alongside the usual invariant features) so that $f(a(x)) \approx T_a f(x)$ for samples $x \sim p$ from the data distribution where $T_a$ is some transformation of the embedding space. A common choice is to take $T_af(x) = \text{MLP}(f(x), a)$, a learnable feed-forward network, and optimize a loss $\| \text{MLP}(f(x), a) - f(a(x))\|_2$. Whilst a learnable MLP ensures that information about $a$ is encoded into the embedding of $a(x)$, it permits complex non-linear relations between embeddings and hence does not necessarily encode relations in a linearly separable way. Furthermore, it does not enjoy the beneficial properties of equivariance in the formal group-theoretic sense, such as consistency under compositions in general: $T_{a_2 \circ a_1}f(x) \neq T_{a_2} T_{a_1}f(x)$. 

Instead, this work introduces \care, an equivariant contrastive learning approach respecting two key design principles:
\begin{principle}
  The map $T_a$ satisfying  $f(a(x)) = T_a f(x)$ should be linear.
\end{principle}
\begin{principle}
 Equivariance should be learned from \emph{pairs} of data, as in invariant contrastive learning.
\end{principle}
The first principle asks that $f$ converts complex perturbations $a$ of input data into much simpler (i.e., linear) transformations in embedding space.  Specifically, we constrain the complexity of $T_a$ by considering isometries of the sphere,  $O(d) = \{ Q \in \mathbb{R}^{d \times d} : QQ^T = Q^TQ = I\}$, containing all rotations and reflections. Throughout this paper we define $f(a(x)) = T_a f(x)$ for $T_a \in O(d)$ to be \emph{orthogonal equivariance}. This approach draws heavily from ideas in linear representation theory \citep{curtis1966representation,serre1977linear}, which studies how to convert abstract group structures into matrix spaces equipped with standard matrix multiplication as the group operation.

The second principle stipulates \emph{how} we want to learn orthogonal equivariance. Naively following previous non-linear approaches is challenging as our learning problem is more constrained, requiring learning a mapping $a \mapsto R_a$ to orthogonal matrices. Furthermore, for a single $(a,x)$ pair, the orthogonal matrix $R_a$ such that $f(a(x)) = R_a f(x)$ is not unique, making it hard to directly learn $R_a$. We sidestep these challenges by, instead of explicitly learning $R_a$, training $f$ so that an augmentation $a$ applied to two different inputs $x,x^+$ produces the same change in embedding space.

Our method, \care,  encodes data augmentations (cropping, blurring, jittering, etc.) as $O(d)$ transformations of embeddings using an equivariance-promoting objective function. \care can be viewed as an instance of \emph{symmetry regularization}, a term introduced by \cite{shakerinavastructuring}.

\section{\care: \carefullform}\label{sec: main method}

This section introduces a simple and practical approach for training a model $f : \mathcal X \rightarrow \mathbb{S}^{d-1}$ so that $f$ is orthogonally equivariant: i.e., a data augmentation $a \sim \mathcal A$ (cropping, blurring, jittering, etc.) applied to any input $x \in \mathcal{X}$ causes the embedding $f(x)$ to transformed by the same $R_a \in O(d)$ for all $x \in \mathcal{X}$: $f(a(x)) = R_a f(x)$.

To achieve this, we consider the following loss: 
\begin{equation}\mathcal L_\text{equi}(f) = \mathbb{E}_{a \sim \mathcal A} \mathbb{E}_{x,x' \sim \mathcal X} \big  [ f(a(x'))^\top f(a(x)) - f(x) ^\top f(x') \big ]^2\end{equation}

Since inner products describe angles on the sphere, this objective enforces the angles between the embeddings of independent samples $x$ and $x'$ to be the same as those between their transformed counterparts $a(x)$ and $a(x')$. This is necessarily true if $f$ is orthogonally equivariant or, more generally, $R_a \in O(d)$ exists. But the converse---that $\mathcal L_\text{equi}=0$ implies orthogonal equivariance---is non-obvious. In Section \ref{sec: theory} we theoretically analyze $\mathcal L_\text{equi}$, demonstrating that it does indeed enforce mapping input augmentations to orthogonal transformations of embeddings. In practice, we replace the $ f(x) ^\top f(x')$ term with $ f(a'(x)) ^\top f(a'(x'))$ for a freshly sampled $a' \sim \mathcal A$, noting that minimizing this variant also minimizes $\mathcal L_\text{equi}$, if we assume $a'$ can be the identity function with non-zero probability.
\begin{figure}[tbp]
    \centering
    \begin{minipage}{0.59\textwidth}
        \centering
        \includegraphics[width=\textwidth]{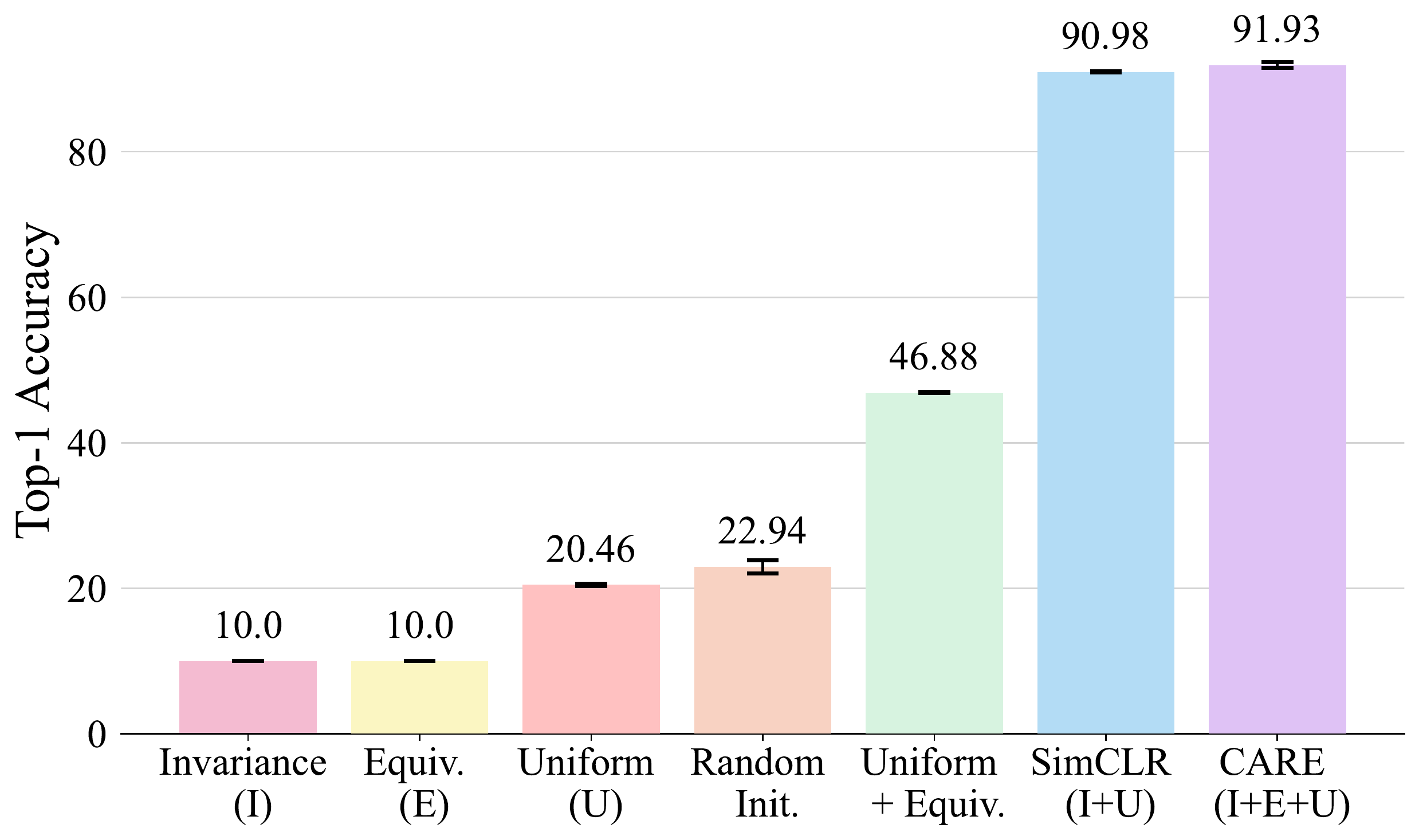}
        \caption{Ablating different loss terms. Combining $\mathcal L_\text{equi}$ with a uniformity promoting non-collapses term suffices to learn non-trivial features. However, optimal performance is achieved when encouraging \emph{smaller} rotations, as in \care. ResNet-50 models pretrained on CIFAR10 and evaluated with linear probes.}
        \label{fig: losses ablation}
    \end{minipage}\hfill
    \begin{minipage}{0.40\textwidth}
        \centering
        \includegraphics[width=\textwidth]{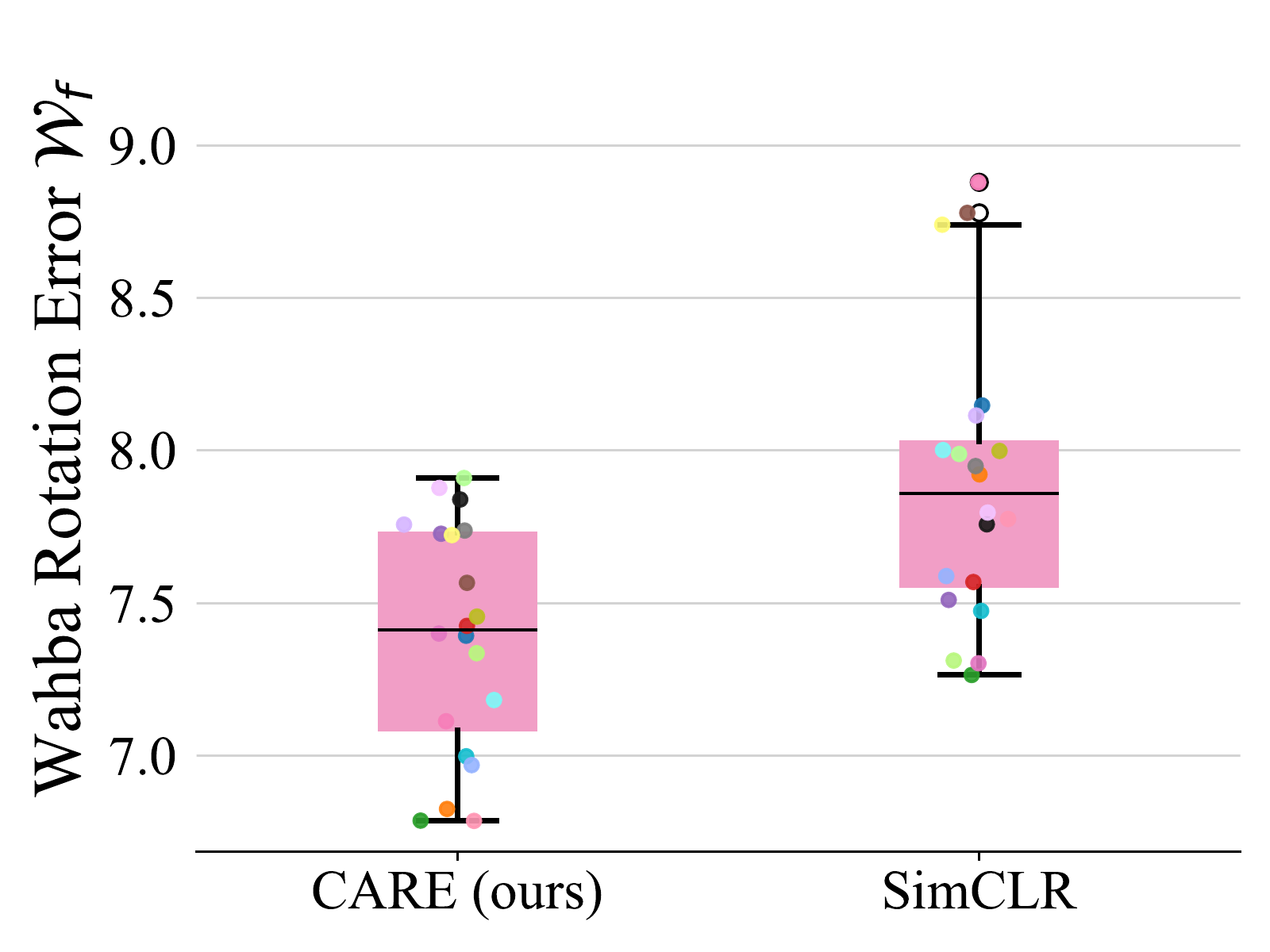}
        \caption{\care learns a representation space with better rotational equivariance. We compare the models by the error of optimally rotating a set of embeddings to match the embeddings of augmented inputs, known as Wahba's problem (Sec. \ref{sec: measuring rotations}).}
        \label{fig: wahba boxplot}
    \end{minipage}
\end{figure}
A trivial but undesirable solution that minimizes $\mathcal L_\text{equi}$ is to collapse the embeddings of all points to be the same 
(see Figure \ref{fig: losses ablation}). One natural approach to avoiding trivial solutions is to combine the equivariance loss with a non-collapse term such as the uniformity $\mathcal L_\text{unif}(f) = \log \mathbb{E}_{x,x'\sim \mathcal X} \exp \big ( f(x)^\top f(x') \big )$  \citep{wang2020understanding}
 whose optima $f$ distribute points uniformly over the sphere:
\begin{equation} \mathcal L(f) = \mathcal L_\text{equi}(f) + \mathcal L_\text{unif}(f).\end{equation}
This is directly comparable to the InfoNCE loss, which can similarly be decomposed into two terms:
 \begin{equation} \mathcal L_\text{InfoNCE}(f) = \mathcal L_\text{inv}(f) + \mathcal L_\text{unif}(f)\end{equation}
 where $\mathcal  L_\text{inv}(f) = \mathbb{E}_{a,a' \sim \mathcal A} \| f(a(x)) - f(a'(x))
\|$ is minimized when $f$ is invariant to $\mathcal A$---i.e., $f(a(x)) =f(x)$.
Figure \ref{fig: losses ablation} shows that training using $\mathcal L_\text{equi} + \mathcal L_\text{unif}$ yields non-trivial representations. However, the performance is below that of invariance-based contrastive learning approaches. We hypothesize that this is because data augmentations---which make small perceptual changes to data---should correspond to \emph{small} perturbations of embeddings, which $\mathcal L_\text{equi}$ does not enforce.

To rule out this possibility, we introduce \care: \carefullform. \care additionally enforces the orthogonal transformations in embedding space to be \emph{localized} by reintroducing an invariance loss term $\mathcal L_\text{inv}$ to encourage $f$ to be approximately invariant. Doing so breaks the indifference of $\mathcal L_\text{equi}$ between large and small rotations,  biasing towards small. Specifically, we propose the following objective that combines our equivariant loss with InfoNCE: 
\begin{equation} \mathcal L_{\text{\care}}(f) = \mathcal L_\text{inv}(f) + \mathcal L_\text{unif}(f) + \lambda \mathcal L_{\text{equi}}(f) \end{equation}
where $\lambda$ weights the equivariant loss. We note that many variations of this approach are possible. For instance, the equivariant loss and InfoNCE loss could use different augmentations, resulting in invariance to specific transformations while maintaining rotational equivariance to others, similar to \cite{dangovski2021equivariant}. The InfoNCE loss can also be replaced by other Siamese self-supervised losses. We leave further exploration of these possibilities to future work.
In all, \care consists of three components: (i) a term to induce orthogonal equivariance; (ii) a non-collapse term; and (iii) an invariance term to enforce localized transformations on the embedding space.

\subsection{Theoretical properties of the orthogonally equivariant loss}\label{sec: theory}
In this section, we establish that matching angles via $\mathcal L_\text{equi}$ leads to a seemingly stronger property. Specifically, $\mathcal L_\text{equi}=0$ implies the existence of an orthogonal matrix $R_a \in O(d)$ for any augmentation $a$, such that $f(a(x)) = R_a f(x)$ holds for all $x$. The converse also holds and is easy to see. Indeed, suppose such an $R_a \in O(d)$ exists. Then, $f(a(x'))^\top f(a(x)) = f(x')^\top R_a^\top R_a f(x) = f(x)^\top f(x')$, which implies $\mathcal L_\text{equi}(f) = 0$. We formulate the first direction as a proposition.

\begin{restatable}{prop}{rotationthm}
\label{prop: equivariance loss learns rotations}
    Suppose $ \mathcal L_\text{equi}(f) =0$. Then for almost every $a \in \mathcal A$, there is an orthogonal matrix $R_{a} \in O(d)$ such that $f(a(x)) = R_{a}f(x)$ for almost all $x \in \mathcal X$.
\end{restatable}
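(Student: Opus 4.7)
The plan is to read the hypothesis $\mathcal L_\text{equi}(f) = 0$ as saying that the Gram matrices of $\{f(x_i)\}$ and $\{f(a(x_i))\}$ agree for almost every $a$ and almost every collection $\{x_i\}$, and then to reconstruct $R_a$ from this Gram data, in the same spirit in which one recovers a configuration of vectors from its Gram matrix. Since the integrand is non-negative, a double application of Fubini provides a full-measure set $A_0 \subseteq \mathcal A$ and, for each $a \in A_0$, a null set $N_a \subseteq \mathcal X \times \mathcal X$ outside of which $\langle f(a(x)), f(a(x')) \rangle = \langle f(x), f(x') \rangle$ holds, together with a full-measure $\mathcal X_a \subseteq \mathcal X$ on which every slice $\{x' : (x,x') \in N_a\}$ is null. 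Fix any $a \in A_0$, write $g(x) := f(a(x))$, let $V \subseteq \mathbb R^d$ be the smallest subspace containing $f(x)$ for almost every $x$, and set $k := \dim V$.

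Next I inductively select $x_1, \ldots, x_k \in \mathcal X_a$ so that $\{f(x_i)\}_{i=1}^k$ is a basis of $V$. Having chosen $x_1, \ldots, x_{i-1}$, the set $\{x : f(x) \notin \operatorname{span}(f(x_1), \ldots, f(x_{i-1}))\}$ has positive measure by definition of $V$ whenever $i-1 < k$, and intersecting with $\mathcal X_a$ and with $\{x : (x_j, x) \notin N_a \text{ for every } j < i\}$ preserves positivity. After this induction, the angle-matching identity holds on every pair $(x_i, x_j)$, so the Gram matrices of $\{f(x_i)\}$ and $\{g(x_i)\}$ coincide. This immediately forces $\{g(x_i)\}$ to be linearly independent, and the span $V' := \operatorname{span}(g(x_1), \ldots, g(x_k))$ has dimension $k$; since every $g(x_i)$ lies in $V$, we conclude $V' = V$.

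Let $\tilde R : V \to V$ be the unique linear map with $\tilde R f(x_i) = g(x_i)$; Gram-matrix preservation makes $\tilde R$ an isometry, and extending by any orthogonal map on $V^\perp$ yields the desired $R_a \in O(d)$. Passing to a further full-measure $\mathcal X'_a \subseteq \mathcal X_a$ on which $\langle g(x), g(x_i) \rangle = \langle f(x), f(x_i) \rangle$ holds for every $i$, orthogonality of $R_a$ gives $\langle g(x) - R_a f(x), g(x_i) \rangle = 0$ for all $i$, so $g(x) - R_a f(x) \perp V' = V$. But both $g(x)$ and $R_a f(x)$ lie in $V$, so the difference lies in $V \cap V^\perp = \{0\}$, and $g(x) = R_a f(x)$ for every $x \in \mathcal X'_a$, as required.

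I expect the main obstacle to be the measure-theoretic bookkeeping in the inductive basis selection: one must simultaneously stay inside the generic set $\mathcal X_a$, avoid falling into the span of previously chosen images, and manage a growing family of null sets indexed by the already-chosen $x_j$. The brief algebraic identity $V' = V$ is also genuinely essential, since the concluding argument uses both $g(x) \in V$ and $R_a f(x) \in V$ to turn orthogonality-to-$V$ into equality; if $V'$ were a proper subspace of $V$ the Gram data would already be inconsistent, but one has to notice this to close the proof.
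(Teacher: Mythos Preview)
Your overall plan matches the paper's: fix $a$, select finitely many points whose $f$-images span the essential range, use Gram-matrix preservation to build an isometry on that span, extend orthogonally, and then verify the identity pointwise. The paper packages this step as a separate lemma (a ``continuous First Fundamental Theorem,'' its Proposition~3) and invokes it as a black box; you unpack it directly and are considerably more careful about the Fubini/null-set bookkeeping, which the paper largely elides.

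There is, however, a real gap at ``since every $g(x_i)$ lies in $V$, we conclude $V' = V$,'' and again at ``both $g(x)$ and $R_a f(x)$ lie in $V$.'' Nothing in the hypotheses forces $g(x) = f(a(x))$ to land in the essential span $V$ of $f$: the augmentation $a$ need not pull the full-measure set $\{x : f(x)\in V\}$ back to a set of full measure. Concretely, take $d=2$, let $f \equiv e_1$ off a single point $x_0$ with $f(x_0)=e_2$, and let $a \equiv x_0$; then $V = \mathbb{R}e_1$ but $g \equiv e_2 \notin V$, while $\mathcal L_{\text{equi}}=0$. So $V'$ need not even be contained in $V$, and your closing step collapses. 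The repair uses the one hypothesis you never touch, namely $f(\mathcal X)\subseteq \mathbb{S}^{d-1}$: define $\tilde R$ as an isometry $V \to V'$ (not $V\to V$), extend by any isometry $V^\perp \to (V')^\perp$, and then from $g(x) - R_a f(x) \perp V'$, $R_a f(x) \in V'$, and $\|g(x)\| = \|R_a f(x)\| = 1$, Pythagoras forces the $(V')^\perp$-component of $g(x)$ to vanish, giving $g(x)=R_a f(x)$. The paper's lemma handles the analogous point differently: it selects the $x_i$ so that the $g$-side already has maximal rank, whence $g(x)\in V'$ follows from maximality rather than from a norm computation.
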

Figure \ref{fig:main_figure} illustrates this result. Crucially $R_{a}$ is independent of $x$, without which the Proposition \ref{prop: equivariance loss learns rotations} would be trivial. That is, a single orthogonal transformation $R_{a}$ captures the impact of applying $a$ across the entire input space $\mathcal{X}$. Consequently,  low $\mathcal L_\text{equi}$ loss converts ``unstructured'' augmentations in input space to have %
a structured geometric interpretation as rotations in the embedding space.

\par This result can be expressed as the existence of a mapping $\rho: \mathcal A \rightarrow O(d)$ that encodes the space of augmentations within $O(d)$. This raises a natural question: how much of the structure of $\mathcal A$ does this encoding preserve? For instance, assuming $\mathcal A$ is a semi-group (i.e., closed under compositions $a' \circ a \in \mathcal A$), does  this transformation respect compositions: $f(a'(a(x)) = R_{a'} R_{a} f(x)$? This property does not hold for non-linear actions \citep{devillers2022equimod}, but does for orthogonal equivariance:

\begin{restatable}{cor}{corrgroup}
\label{cor: homomorphism result}
    If  $ \mathcal L_\text{equi}(f) =0$, then $\rho : \mathcal A \rightarrow O(d)$ given by $\rho(a) = R_a$ satisfies $\rho(a' \circ a) = \rho(a')\rho(a)$ for almost all $a,a'$. That is, $\rho$ defines a group action on $\mathbb{S}^{d-1}$ up to a set of measure zero.
\end{restatable}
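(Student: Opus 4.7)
The plan is to bootstrap Proposition~\ref{prop: equivariance loss learns rotations} by applying it three times — to $a$, to $a'$, and to $a' \circ a$ — and then identify the resulting three orthogonal matrices by chaining the pointwise identities they provide. Because $\mathcal L_\text{equi}(f) = 0$ and the integrand is a square, Proposition~\ref{prop: equivariance loss learns rotations} yields, for almost every augmentation $b \in \mathcal A$, an $R_b \in O(d)$ with $f(b(x)) = R_b f(x)$ for almost every $x$. Hence for almost every pair $(a, a')$ — and, granting the mild regularity that the composition pushforward keeps $a' \circ a$ inside this almost-sure set — the three matrices $R_a$, $R_{a'}$, $R_{a' \circ a}$ are simultaneously defined.

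Chaining the identities is then direct: for $x$ outside the null set $N_a \cup a^{-1}(N_{a'}) \cup N_{a' \circ a}$, one writes $R_{a' \circ a} f(x) = f(a'(a(x))) = R_{a'} f(a(x)) = R_{a'} R_a f(x)$. The only delicate point here is that $a(x) \notin N_{a'}$ for almost every $x$, i.e., the preimage $a^{-1}(N_{a'})$ must itself be $p$-null. This is exactly the natural absolute-continuity condition $a_* p \ll p$ that is already implicit in the fact that the inner expectation $\mathbb{E}_{x \sim \mathcal X}[\,\cdots f(a(x)) \cdots]$ appears in $\mathcal L_\text{equi}$, and which any reasonable family of augmentations satisfies.

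To promote this vector identity to the claimed matrix identity $\rho(a' \circ a) = \rho(a') \rho(a)$, I would let $V = \text{span}\{ f(x) : x \in \mathcal X \}$. On $V$ the operators $R_{a' \circ a}$ and $R_{a'} R_a$ agree, so under the mild non-degeneracy $V = \mathbb R^d$ they agree as elements of $O(d)$ and the conclusion follows. I expect this last step to be the only real obstacle: if the image of $f$ sat in a proper subspace, then $R_a$ would be pinned down only up to its action on $V^\perp$, and the composition rule would hold only after fixing a consistent extension (equivalently, after reinterpreting $\rho$ as taking values in $O(V)$, where the identity is automatic). Since the interesting regime has $f(\mathcal X)$ spanning, and since a collapsed $f$ would anyway be ruled out by the uniformity term used alongside $\mathcal L_\text{equi}$, this is a harmless technicality rather than a substantive obstruction.
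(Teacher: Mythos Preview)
Your proposal is correct and follows essentially the same route as the paper: apply Proposition~\ref{prop: equivariance loss learns rotations} to $a$, $a'$, and $a'\circ a$, then chain $R_{a'\circ a}f(x)=f(a'(a(x)))=R_{a'}R_a f(x)$ to conclude. You are in fact more careful than the paper about two points it glosses over---the absolute-continuity condition ensuring $a^{-1}(N_{a'})$ is null, and the spanning condition $V=\mathbb{R}^d$ needed to upgrade the vector identity to a matrix identity---both of which are genuine but, as you note, harmless technicalities in this setting.
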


Formally, this result states that if $\mathcal A$ is a semi-group, then $\rho : 
\mathcal A \rightarrow O(d)$ defines a group homomorphism (or linear representation of $\mathcal A$ in the sense of representation theory \citep{curtis1966representation,serre1977linear}, a branch of mathematics that studies the encoding of abstract groups as spaces of linear maps). 
\begin{wrapfigure}{r}{0.4\textwidth}
    \centering
    \includegraphics[width=0.37\textwidth]{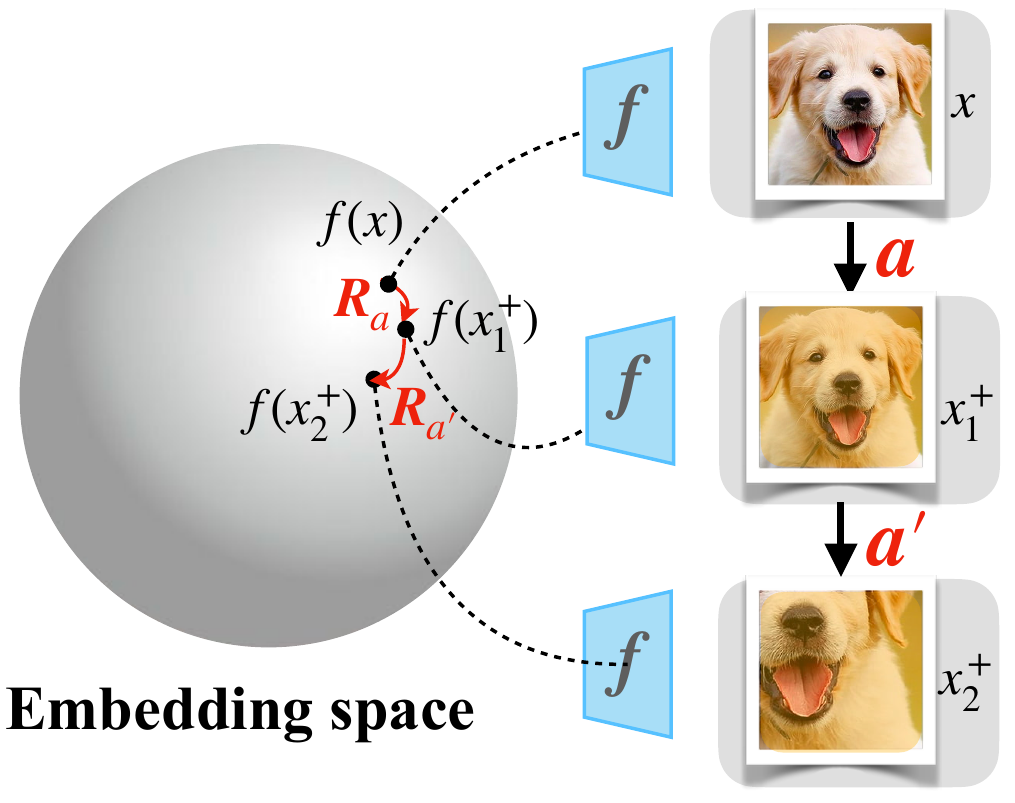}
    \caption{When $\mathcal L_\text{equi}=0$, compositions of augmentations correspond to compositions of rotations.}
    \label{fig:figure_label}
\end{wrapfigure}
To exactly attain $ \mathcal L_\text{equi}(f) =0$, the space of augmentations $\mathcal A$ needs to have a certain structure, but this becomes less restrictive if $d$ is large. Assuming for simplicity that $\mathcal A$ is a group, the first isomorphism theorem for groups states that $\rho(\mathcal A)\simeq \mathcal A/\operatorname{ker}(\rho)$. For instance, if $\operatorname{ker}(\rho)$ is trivial, the equivariant loss can be exactly zero when the group of augmentations is a subgroup of the orthogonal group. Examples include orthogonal transformations or rotations that fix a subspace---i.e., $O(d')$ or $SO(d')$ with $d'\leq d$---or subgroups of the permutation group on $d$ elements. Furthermore, the Peter-Weyl theorem implies that any compact Lie group can be realized as a closed subgroup of $O(d)$ for some $d$ \citep{peter1927vollstandigkeit}.  In practice, we are learning  equivariance, so do not expect to achieve exactly zero loss. Instead, the primary focus is on achieving better approximate equivariance (see Figure \ref{fig:relative_measure}), while enforcing small transformations that remain local.

\subsection{Extensions to other groups}\label{sec: extension to other groups}
Proposition \ref{prop: equivariance loss learns rotations} states that perfectly optimizing $\mathcal L_\text{equi}=0$ produces an $f$ that is equivariant, encoding augmentations in the input space as orthogonal transformation in the embedding space.
Notably, since the computation of $\mathcal L_\text{equi}$ solely relies on pairwise data instances $x,x'\in \mathcal X$, it naturally aligns with the contrastive learning paradigm that already works with pairs of data. However, this alignment does not hold in cases where orthogonal transformations in the embedding space are replaced by arbitrary group actions.

Mathematically, invariants of the action of $O(d)$ on $n$ points---seen in $(\mathbb R^d)^n$ as $Q\,(x_1, \ldots x_n)=(Q\,x_1, \ldots, Q\,x_n)$---can be expressed as a function of pairs of objects $(x_i^\top x_j)_{i,j=1\ldots n}$. 
This is because the orthogonal group is defined as the stabilizer of a bilinear form. In other words, letting $B(x,x') = x^\top x'$ denote the standard inner product, we have
\begin{equation}O(d) = \{ A \in GL(d): B(Ax,Ax') = B(x,x') \text{ for all } x,x'\in \mathbb R^d \}.\end{equation}
This argument applies more generally to other groups that are defined as stabilizers of bilinear forms. For instance, the Lorentz group, which has applications in the context of special relativity, can be defined as the stabilizer of the Minkowski inner product. Additionally, the symplectic group, which is used to characterize Hamiltonian dynamical systems, can be defined in a similar manner.

Such extensions to other groups allow us to use \care for different embedding space geometries. For instance, several recent works have used a hyperbolic space as an embedding space for self-supervised learners~\citep{ge2022hyperbolic, yue2023hyperbolic, desai2023hyperbolic}. If we constrain our embedding to a hyperboloid model of hyperbolic space, then linear isometries of this space are precisely the Lorentz group. Hence, using our equivariance loss with the Minkowski inner product replacing the Euclidean inner product would allow us to learn hyperbolic representations that transform the embeddings according to the action of the Lorentz group when an augmentation is applied to the input space. Further discussions on extensions to other groups are given in Appendix \ref{app: extension to other groups}.

\section{Measuring orthogonal action on embedding space}\label{sec: measuring rotations}
To probe the geometric properties of \care, we consider two efficiently computable metrics for empirically measuring the orthogonal equivariance in the embedding space. We report empirical results with these measures in Section~\ref{sec:wahba_empirical}.

\noindent \textbf{Wahba's problem.}
Proposition \ref{prop: equivariance loss learns rotations} states that a single orthogonal matrix $R_a \in O(d)$ describes the effect of augmentation $a$ for all input points $x$---i.e., $R_a$ does not depend on $x$. Hence, a natural way to assess the equivariance of $f$ is to sample a batch of data $\{x_i\}^{n}_{i=1}$ and an augmentation $a$ and test to what extent applying $a$ transforms the embeddings of each $x_i$ the same way. To measure this we compute a single rotation that approximates the map from $f(x_i)$ to $f(a(x_i))$ for all $i$. Let $F$ and $F_a \in \mathbb{R}^{d \times n}$ have $i$th columns $f(x_i)$ and $f(a(x_i))$ respectively, then we compute the error
\begin{equation}\label{eqn: Wahba}
   \mathcal  W_f = \min\nolimits_{R \in SO(d) } \| RF - F_a\|_\text{Fro},
\end{equation}
where  $\| \cdot \|_\text{Fro}$ denotes the Frobenius norm. If $\mathcal W_f=0$, then $f(a(x_i)) = R_{a}f(x_i)$  for all $i$.  Problem (\ref{eqn: Wahba}) is a well-studied problem known as \emph{Wahba's problem}. 
The analytic solution to Wahba's problem is easily computed. It is nearly
$R^* = UV^\top$ where $U\Sigma V^\top$ is a singular value decomposition of $F_a F^\top$. However, a slight modification is required as this $R^*$ could have determinant $\pm 1$, and therefore may not belong to $SO(d)$. Fortunately, the only modification needed is to re-scale so that the determinant is one: $R^* = U \cdot \text{diag}\big \{ \mathbf{1}_{(n-1)}, \text{det}(U)\text{det}(V) \big \} \cdot V^\top$ 
where $ \mathbf{1}_{n}$ denotes the vector in $\mathbb{R}^n$ of all ones. 
This method of computing the solution $R^*$ to Wahba's problem is known as Kabsch's algorithm \citep{kabsch1976solution}, and has been used for aligning point clouds to, e.g., compare molecular and protein structures and spacecraft attitude determination \citep{markley2014fundamentals,kneller1991superposition}. We use this algorithm to compute the optimal solution $R^*$ and further compare the error of interest as $\mathcal W_f = \| R^* F - F_a\|$.

\noindent \textbf{Relative rotational equivariance.}
Optimizing for the \care objective may potentially result in learning invariance rather than equivariance. Specifically, for input image $x$, $f(a(x))=f(x)$ for $a \in \mathcal{A}$ is a trivial optimal solution of $\argmin_{f}\mathcal L_\text{equi}(f)$. To check that our model is learning non-trivial equivariance, we consider a metric similar to one proposed by \cite{bhardwaj2023steerable} for measuring the equivariance \emph{relative} to the invariance of $f$: \begin{equation}\gamma_f = \mathbb{E}_{a \sim \mathcal A} \mathbb{E}_{x,x' \sim \mathcal X} \Bigg \{\frac{(\|f(a(x')) - f(a(x))\|^2 - \|f(x') - f(x)\|^2)^2}{(\|f(a(x')) - f(x')\|^2 + \|f(a(x)) - f(x)\|^2)^2} \Bigg \}.\end{equation}
Here, the denominator measures the invariance of the representation, with smaller values corresponding to greater invariance to the augmentations. The numerator, on the other hand, measures equivariance and can be simplified to $[ f(a(x'))^\top f(a(x)) - f(x) ^\top f(x') \big ]^2$ (i.e., $\mathcal L_\text{equi}(f)$) up to a constant, because $f$ maps to the unit sphere. The ratio $\gamma_f$ of these two terms measures the non-trivial equivariance, with a lower value implying greater non-trivial orthogonal equivariance.

\section{Experiments}\label{sec: experiments}
We examine the representations learned by \care, as well as those obtained from purely invariance-based contrastive approaches. We study three aspects of our model: 1) quantitative measures of orthogonal equivariance, 2) qualitative evaluation of the effect of equivariance on sensitivity to data transforms, and 3) performance of features learned by \care on image classification tasks.
We describe our experiment configurations in detail in Appendix \ref{app: experimental details}.

\subsection{Qualitative assessment of equivariance}
A key property promised by equivariant contrastive models is sensitivity to specific augmentations. 
To qualitatively evaluate the sensitivity, or equivariance, of our models, we consider an image retrieval task on the Flowers-102 dataset \citep{nilsback2008automated}, as considered by \cite{bhardwaj2023steerable}. Specifically, when presented with an input image $x$, we extract the top 5 nearest neighbors based on the Euclidean distance of $f(x)$ and $f(a(x))$, where $a \in \mathcal{A}$. We report the results of using color jitter as a transformation of the input, comparing the invariant (SimCLR) and our equivariant (\care) models in Figure \ref{fig:flowers_knn}. We see that retrieved results for the \care model exhibit greater variability in response to a change in query color compared to the SimCLR model. Notably, the color of the retrieved results for all queries in the SimCLR model remains largely invariant, thereby confirming its robustness to color changes.

\begin{figure}[tb!]
    \centering
    \includegraphics[width = \textwidth]{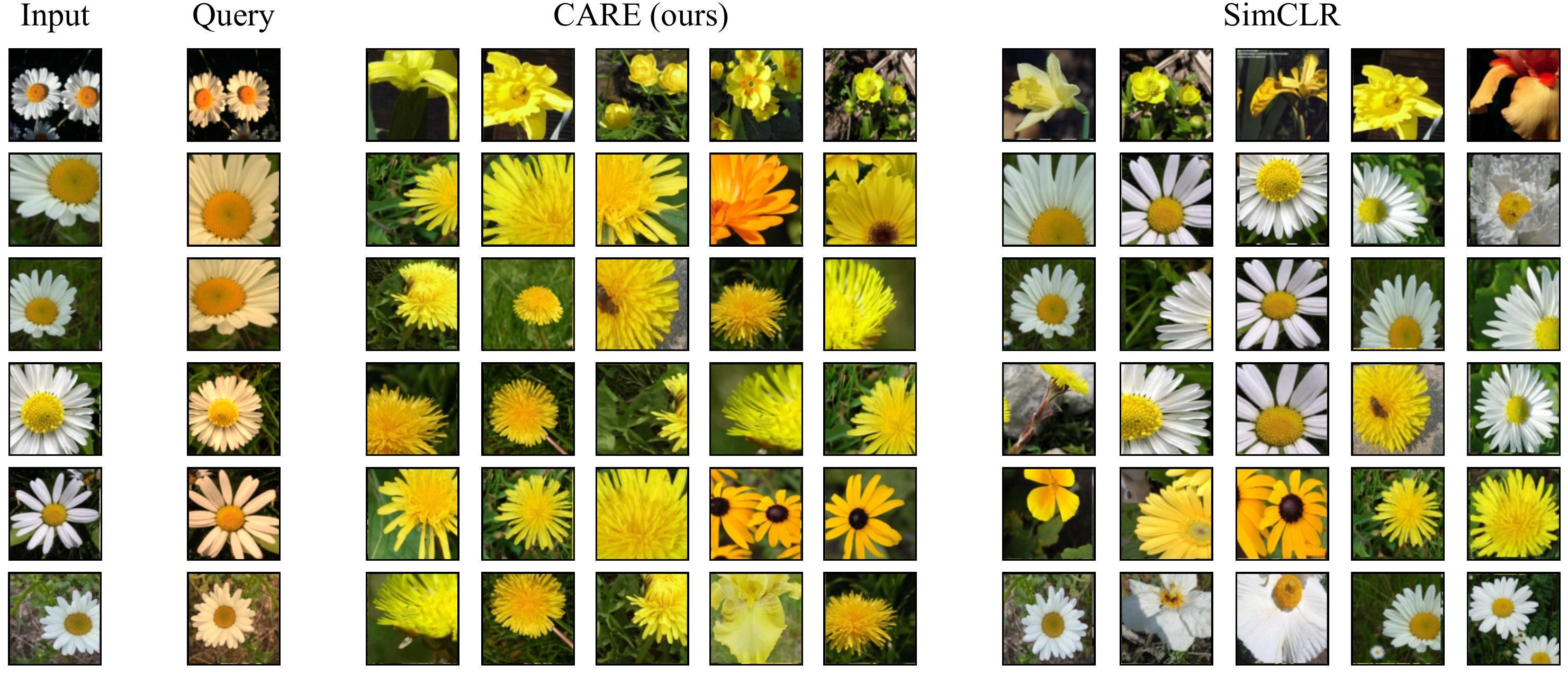}
    \caption{\care exhibits sensitivity to features that invariance-based contrastive methods (e.g., SimCLR) do not. For each input we apply color jitter to produce the query image. We then retrieve the $5$ nearest neighbors in the embedding space of \care and SimCLR.}
    \label{fig:flowers_knn}
\end{figure}

\subsection{Quantitative measures for orthogonal equivariance}\label{sec:wahba_empirical}

\textbf{Wahba's Problem} We compare ResNet-18 models pretrained with \care and with SimCLR on CIFAR10. For each model, we compute the optimal value $\mathcal W_f$ of Wahba's problem, as introduced in Section \ref{sec: measuring rotations}, over repeated trials. In each trial, we sample a single augmentation $a \sim \mathcal A$ at random and compute $\mathcal W_f$ for $f=f_\care$ and $f=f_\text{SimCLR}$ over the test data. We repeat this process 20 times and plot the results in Figure \ref{fig: wahba boxplot}, where the colors of dots indicate the sampled augmentation. Results show that \care has a lower average error and worst-case error. Furthermore, comparing point-wise for a single augmentation, \care achieves lower error in nearly all cases.

\begin{figure}[!htb]
\centering
\begin{minipage}{0.32\textwidth}
\centering
\includegraphics[width=\textwidth]{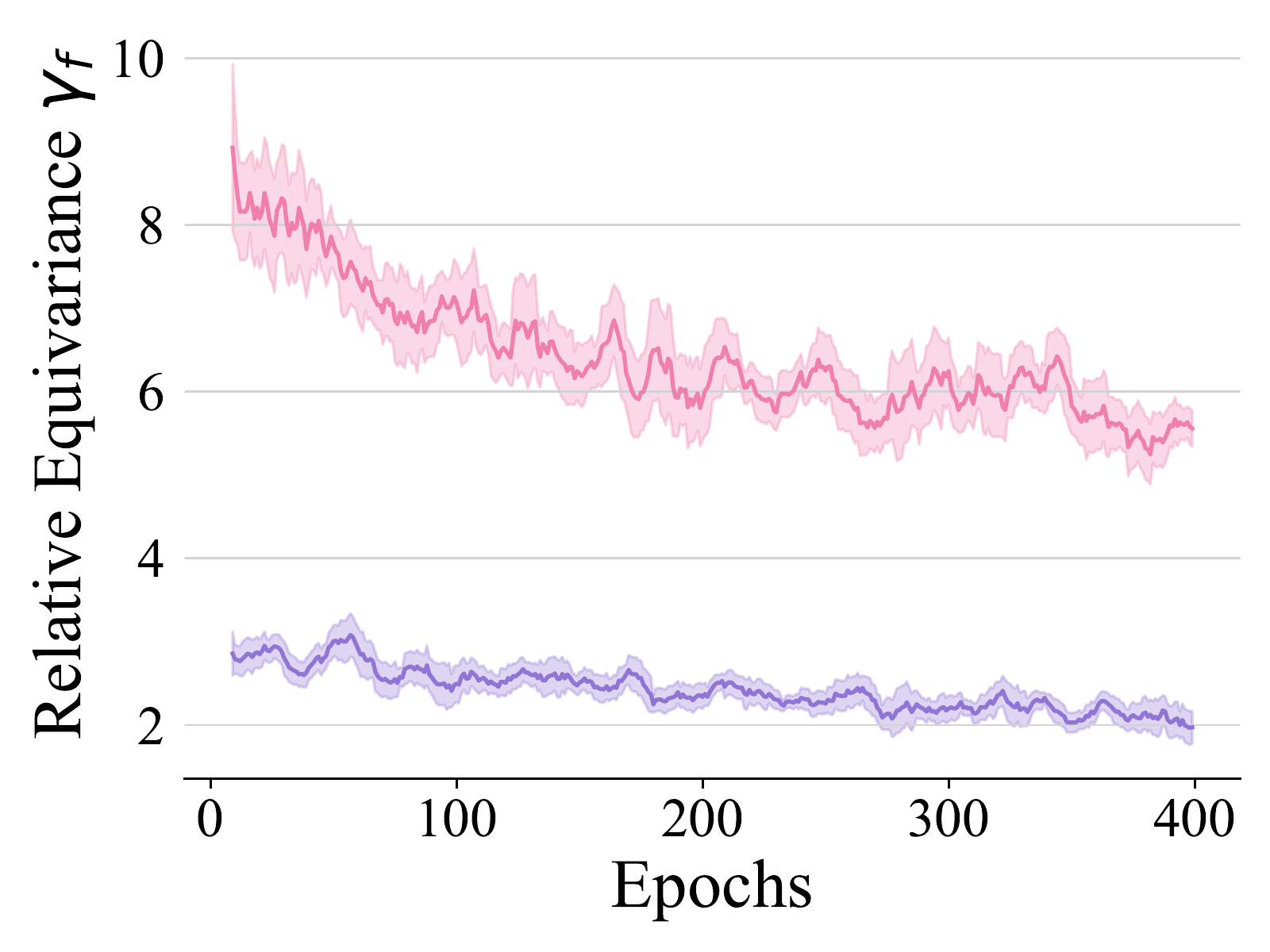}
\end{minipage}
\hfill
\begin{minipage}{0.32\textwidth}
\centering
\includegraphics[width=\textwidth]{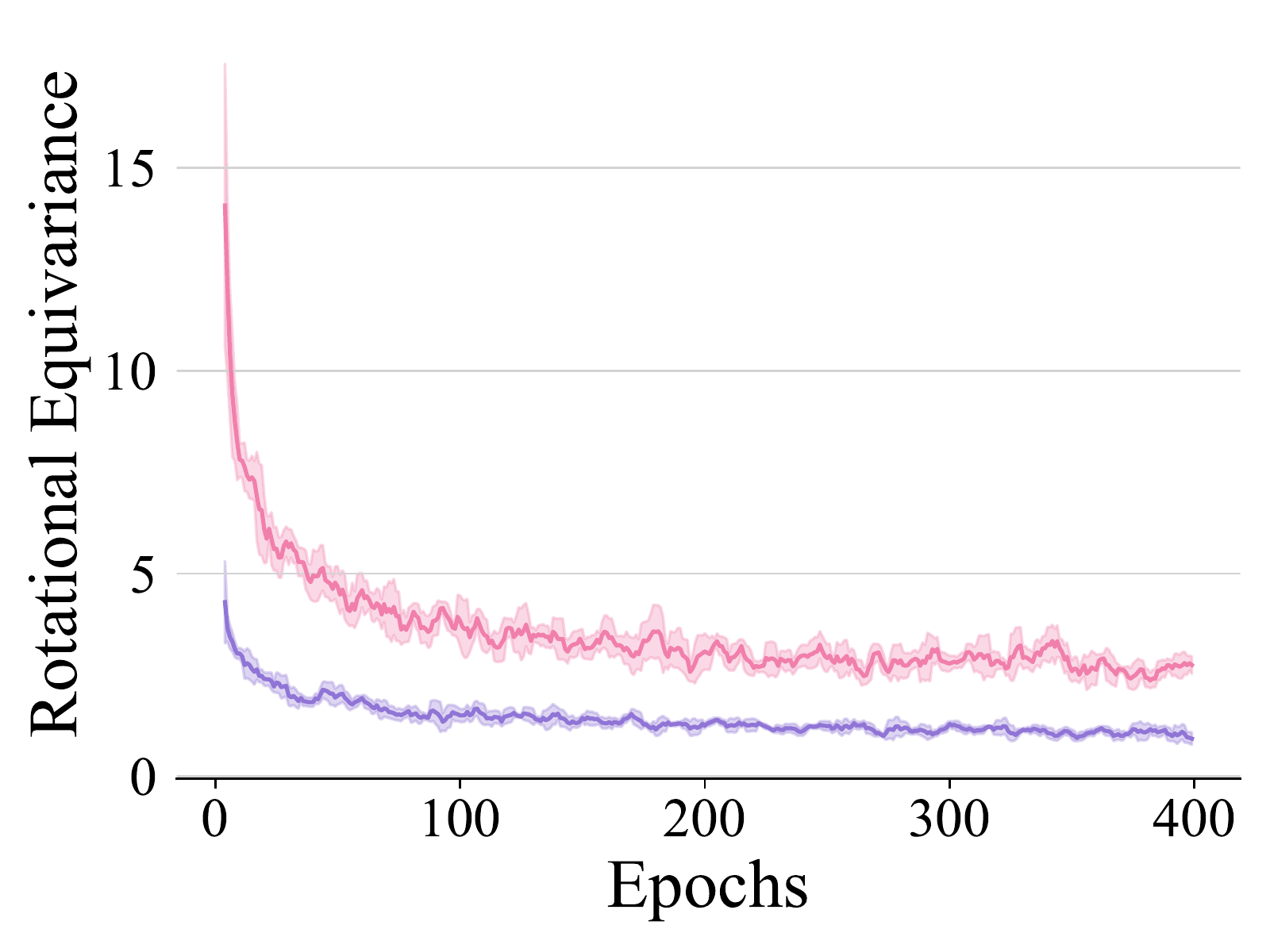}
\end{minipage}
\hfill
\begin{minipage}{0.32\textwidth}
\centering
\includegraphics[width=\textwidth]{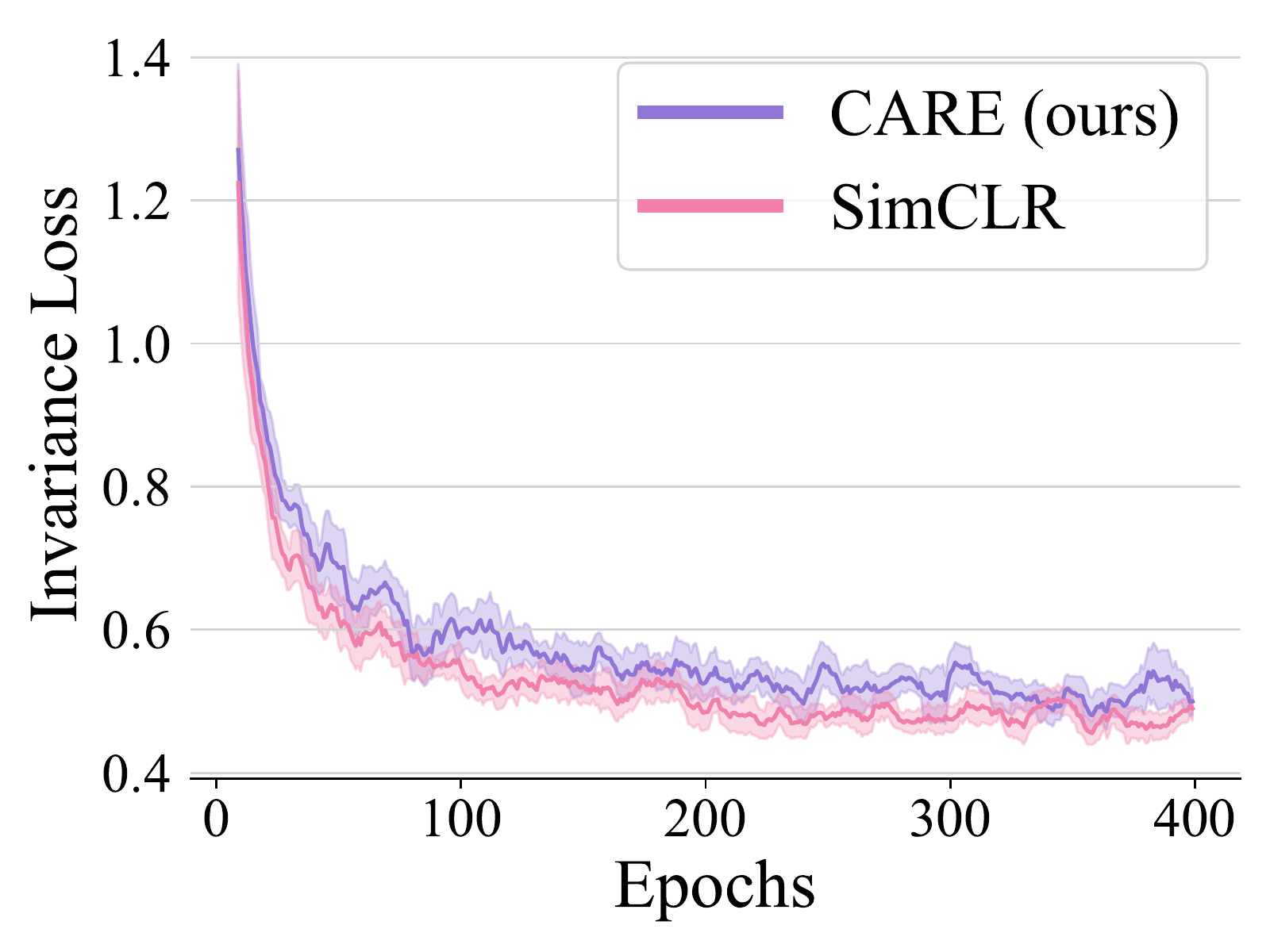}
\vspace{-15pt}
\end{minipage}

\caption{\textbf{Relative rotational equivariance} (lower is more equivariant). Both \care and invariance-based contrastive methods (e.g., SimCLR) produce \emph{approximately} invariant embeddings. However, they differ in their residual sensitivity to augmentations. \care learns a considerably more rotationally structured embedding space. We note that this is in part because \care is less invariant to augmentations (higher invariance loss).}\label{fig:relative_measure}
\end{figure}
\noindent \textbf{Relative rotational equivariance.} We measure the relative rotational equivariance for both \care and SimCLR over the course of pretraining by following the approach outlined in Section \ref{sec: measuring rotations}. Specifically, we compare ResNet-18 models trained using \care and SimCLR on CIFAR10. From Figure \ref{fig:relative_measure}, we observe that both the models produce embeddings with comparable non-zero invariance loss $\mathcal L_\text{inv}$, indicating approximate invariance. However, they differ in their sensitivity to augmentations, with \care attaining a much lower relative equivariance error. Importantly, this shows that \care is \emph{not} achieving lower equivariance error $\mathcal L_\text{equi}$ by collapsing to invariance, a trivial form of equivariance.

\begin{wrapfigure}{r}{0.47\textwidth}
    \centering
    \includegraphics[width=0.47\textwidth]{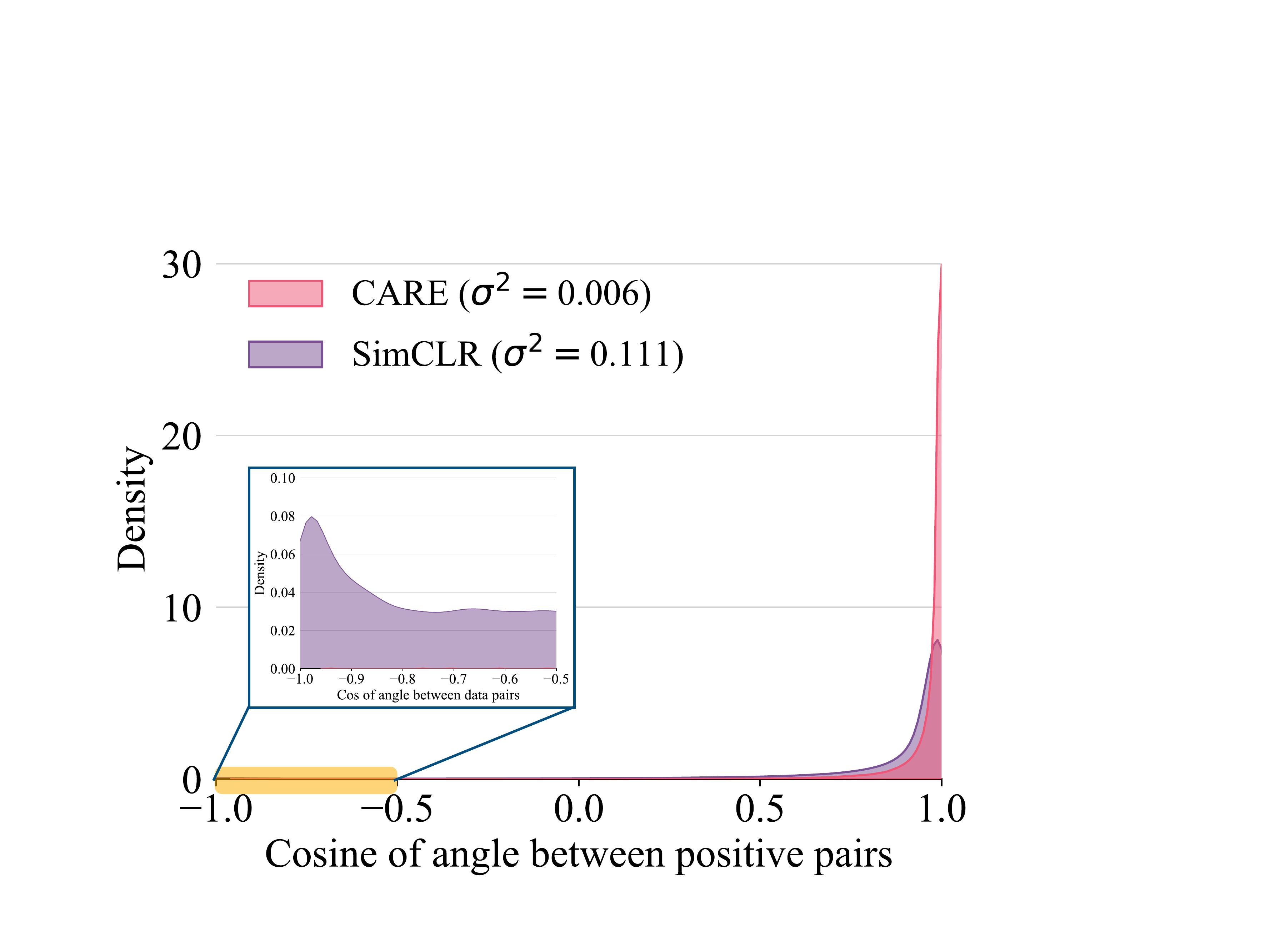}
    \caption{Histogram of the cosine of angles between data pairs for \care and SimCLR. \care exhibits a significantly lower variance of cosine similarity values compared to SimCLR.}
    \label{fig:angle_kde}
    \vspace{-2pt}
\end{wrapfigure}

\noindent \textbf{Analyzing structure on a 2D manifold.} 
To further study $\mathcal L_\text{equi}$, we train an encoder $f$ that projects the input onto $\mathbb{S}^1$, the unit circle in the 2D plane. In this case, orthogonal transformations are characterized by \emph{angles}. We sample an augmentation $a \sim \mathcal A$ and measure the cosine of the angle between pairs $f(x)$ and $f(a(x))$ for all $x$ in the test set. This process is repeated for 20 distinct sampled augmentations, and the density of all recorded cosine angles is recorded in Figure \ref{fig:angle_kde}. Both \care and SimCLR exhibit high density close to 1,
demonstrating approximate invariance.
However, unlike \care, SimCLR exhibits non-zero density in the region $-0.5$ to $-1.0$, indicating that the application of augmentations significantly displaces the embeddings. Additionally, \care consistently exhibits lower variance $\sigma^2$ of the cosine angles between $f(x)$ and $f(a(x))$ for a fixed augmentation, as expected given that it is supposed to transform all embeddings in the same way.

\subsection{Linear probe for image classification}
Next, we examine the quality of features learned by \care for solving image classification tasks. We train ResNet-50 models on four datasets: CIFAR10, CIFAR100, STL10, and ImageNet100 using \care and SimCLR (see Appendix \ref{app: experimental details} for details). To illustrate that \care can also be integrated into other self-supervised frameworks, we train MoCo-v2 models on ImageNet100 (with and without \care). We refer to the model trained using \care with SimCLR or MoCo-v2 backbone as $\text{\care}_{\text{SimCLR}}$ and $\text{\care}_{\text{MoCo-v2}}$ respectively.  For each method and dataset, we evaluate the quality of the learned features by training a linear classifier (i.e., probe \citep{alain2016understanding}) on the frozen features of $f$ and report the test set performances in Figure \ref{fig:accuracy_bar_plot}. In all cases, we run the linear probe training for five random seeds and report averages.  We find consistent improvements in performance using \care, showing the benefits of our structured embedding approach for image recognition tasks.

\begin{figure}[htb!]
    \centering
    \includegraphics[width = \textwidth]{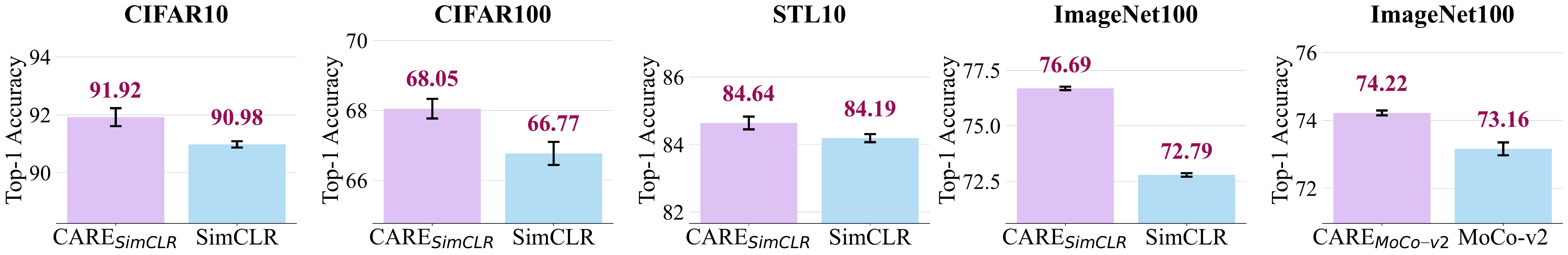}
    \caption{Top-1 linear readout accuracy (\%) on CIFAR10, CIFAR100, STL10 and ImageNet100. All results are from 5 independent seed runs for the linear probe. }
    \label{fig:accuracy_bar_plot}
    \vspace{-10pt}
\end{figure}

\renewcommand*\arraystretch{1.3} %

\subsection{Ablation of loss terms}

The \care loss $\mathcal{L}_{\text{\care}}$ is a weighted sum of the InfoNCE loss $\mathcal L_{\text{InfoNCE}}$ and the orthogonal equivariance loss $\mathcal L_{\text{equi}}$. Furthermore, as outlined in Section \ref{sec: main method}, the InfoNCE loss is itself a combination of an invariance inducing loss $\mathcal L_\text{inv}$ and a non-collapse term $\mathcal L_\text{unif}$. 
To study each loss component, we pretrain ResNet-50 models on CIFAR10 using different combinations of the three losses. The results in Figure \ref{fig: losses ablation} suggest that simply optimizing for $\mathcal L_\text{inv}$ and $\mathcal L_{\text{equi}}$ leads to collapse, while optimizing $\mathcal L_\text{unif}$ alone prevents collapse but performs similar to random initialization. Interestingly, $\mathcal L_\text{unif} + \mathcal L_{\text{equi}}$ yields non-trivial representations without directly enforcing invariance. But the performance falls below that of invariance-based contrastive baselines. In combination with the invariance term $\mathcal L_\text{inv}$---which biases rotations to be small---we achieve superior performance to the invariance-only counterpart.

\section{Related work}\label{sec: related work}

\textbf{Geometry of representations.} Equivariance is a key tool for encoding geometric structure---e.g., symmetries---into neural network representations  \citep{cohen2016group,bronstein2021geometric}.  Whilst hard-coding equivariance into model architectures is very successful, approximate learned equivariance  \citep{kaba2022equivariance,shakerinavastructuring}, has certain advantages: 1) when the symmetry is provided only by data, with no closed-form expression, 2) can still be used when it is unclear how to hard code equivariance into the architecture, and 3) can exploit standard high capacity architectures \citep{he2016deep,dosovitskiy2020image}, benefiting from considerable engineering efforts to optimize their performance. \cite{shakerinavastructuring} also consider learning orthogonal equivariance, but consider problems where both input and embedding space are acted on by $O(d)$. Our setting differs from this in two key ways: 1) we consider a very different set of transforms of input space---jitter, crops, etc.---and 2) can be naturally integrated into contrastive learning, and 3) theoretically study the minima of the angle-preserving loss.  A related line of work, \emph{mechanistic interpretability}, hypothesizes that algorithmic structure---possibly including group symmetries---emerge naturally within network connections during training \citep{chughtai2023toy}. Our approach is very different from this as we directly \emph{train} models to have the desired structure without relying on implicit processes. Finally, the geometry of representation space has been used in a very different sense in prior contrastive learning approaches, for instance bootstrapping useful negatives \cite{chuang2020debiased,robinson2020contrastive} based on their location in embedding space during training.

\noindent \textbf{Self-superised learning.} %
Prior equivariant contrastive learning approaches extend the usual setup of learning invariance by learning \emph{sensitivity} to certain features known to be important for downstream tasks. 
For instance, \cite{dangovski2021equivariant} learns to predict the augmentation applied but only considers a discrete group of 4-fold rotations. \cite{lee2021improving} learns the difference of augmentation parameters and \cite{xiao2020should} constructs separate embedding sub-spaces that capture invariances to all but one augmentation. However, these approaches do not offer a meaningful structure to the embedding space. Others attempt to control how this sensitivity occurs. Specifically, \cite{devillers2022equimod,garrido2023self,bhardwaj2023steerable} learn a mapping from one latent representation to another, predicting how data augmentation affects the embedding. 
But this does not constrain the group action on embeddings, resulting in complex non-linear augmentation maps. Finally, the recent work \cite{suau2023duet} implements approximate equivariance using 2D representations.

\section{Discussion}

Converting transformations that are complex in input space into simple transformations in embedding space has many potential uses.
For instance, modifying data (e.g., in order to reason about counterfactuals) can be viewed as transforming one embedding to another. If the sought after transformation was \emph{simple} and \emph{predictable}, it may be easier to find. Similarly, generalizing out-of-distribution is easier when extrapolating linearly \citep{xu2020neural}, suggesting that linear transformations of embedding space may facilitate more reliable generalization. This work considers several design principles that may be broadly relevant: 1) \emph{learned} equivariance preserves the expressivity of backbone architectures, and in some cases may be easier for model design than hard-coded equivariance, 2) linear group actions are desirable, but require carefully designed objectives (similar in spirit to the principle of \emph{parsimony} \citep{ma2022principles}, also advocated for by \cite{shakerinavastructuring}), and 3) orthogonal (and related) symmetries are a promising structure for Siamese network training as they can be efficiently learned using \emph{pair-wise} data comparisons.

\section{Acknowledgements}
\par This research was supported by NSF award CCF-2112665. Sharut Gupta is supported by MIT Presidential Fellowship. Derek Lim is supported by National Science Foundation Graduate Research Fellowship. 
\par We acknowledge MIT SuperCloud and Lincoln Laboratory Supercomputing Center
(Reuther et al., 2018) for providing HPC resources that have contributed to this work. We wish to thank Michael Murphy for insightful discussions on extensions of our method to biology.

\section{Reproducibility statement}
Algorithm \ref{alg:main_algorithm} in Appendix \ref{sec:algo_pseudocode} provides the pseudocode for implementing our work using the PyTorch framework. To ensure reproducibility, Appendix \ref{app: precise_exp_deets} details all the experimental configurations employed in our work. Additionally, our code is available at \href{https://github.com/Sharut/CARE}{\texttt{https://github.com/Sharut/CARE}}.

\bibliographystyle{abbrvnat}
\bibliography{bib}

\newpage
\appendix

\section{Proofs of theoretical results}\label{appendix: proofs}

The aim of this section is to detail the proofs of the theoretical results presented in the main manuscript. The key theoretical tools driving our analysis are prepared separately in Section \ref{app: orthogonal theory}.

Throughout our analysis, we assume that all spaces (e.g., $\mathcal A$ and $\mathcal X$) are subspaces of Euclidean space and therefore admit a Lebesgue measure. We also assume that all distributions (e.g., $a \sim \mathcal A$ and $x \sim \mathcal X$) admit a density with respect to the Lebesgue measure. With these conditions in mind, we recall the loss function that is the main object of study:
\begin{equation}\mathcal L_\text{equi}(f) = \mathbb{E}_{a \sim \mathcal A} \mathbb{E}_{x,x' \sim \mathcal X} \big  [ f(a(x'))^\top f(a(x)) - f(x) ^\top f(x') \big ]^2\end{equation}
Next, we re-state and prove Proposition \ref{prop: equivariance loss learns rotations}, our first key result.

\rotationthm*
\begin{proof}
    Suppose that $ \mathcal L_\text{equi}(f) =0$. This means that $f(a(x'))^\top f(a(x)) = f(x) ^\top f(x')$ for almost all $a \in G$, and $x, x' \in \mathcal X$. Setting $g_a(x) = f(a(x))$, we have that $g_a(x')^\top g_a(x) = f(x) ^\top f(x')$. The continuous version of the First Fundamental Theorem of invariant theory for the orthogonal group  (see Proposition \ref{thm: first fundamental theorem continuous version}) implies that there is an $ R_{a} \in O(d)$ such that $f(a(x))=g_a(x)=R_{a} f(x)$.
\end{proof}
As discussed in greater detail in the main manuscript, these results show that minimizing $\mathcal L_\text{equi}$ produces a model where an augmentation $a$ corresponds to a single orthogonal transformation of embeddings $R_a$, independent of the input. This result is continuous in flavor as it studies the loss over the full data distribution $p(x)$. There exists a corresponding result for the finite sample loss
\begin{equation*}
   \mathcal L_{\text{equi},n}(f) = \mathbb{E}_{a \sim \mathcal A} \sum_{i,j=1}^n \big  [ f(a(x_j))^\top f(a(x_i)) - f(x_i) ^\top f(x_j) \big ]^2.
\end{equation*}
\begin{prop}\label{prop: equivariance loss learns rotations finite sample version}
    Suppose $ \mathcal L_{\text{equi},n}(f) =0$. Then for almost every $a \in \mathcal{A}$, there is an orthogonal matrix $R_{a} \in O(d)$ such that $f(a(x_i)) = R_{a}f(x_i)$ for all $i =1, \ldots , n$.
\end{prop}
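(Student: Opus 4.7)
The plan is to reduce the finite-sample statement to a classical fact about Gram matrices, via an argument that closely parallels the proof of Proposition \ref{prop: equivariance loss learns rotations} but is more elementary because no integration with respect to a density is required.

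First I would use that $\mathcal L_{\text{equi},n}(f)$ is an expectation over $a$ of a sum of non-negative terms. Vanishing of this expectation forces, for almost every $a \in \mathcal A$, the identity
\[
f(a(x_i))^\top f(a(x_j)) \;=\; f(x_i)^\top f(x_j) \qquad \text{for all } i,j \in \{1,\ldots,n\}.
\]
Fix such an $a$ for the remainder of the argument, and write $v_i = f(x_i)$ and $w_i = f(a(x_i))$. The display above says the $n \times n$ Gram matrices of $\{v_i\}$ and $\{w_i\}$ coincide. The goal is then a purely linear-algebraic one: produce $R_a \in O(d)$ with $R_a v_i = w_i$ for all $i$.

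To construct $R_a$, let $V = \operatorname{span}\{v_1,\ldots,v_n\}$ and $W = \operatorname{span}\{w_1,\ldots,w_n\}$. Pick an index set $I \subseteq \{1,\ldots,n\}$ such that $\{v_i\}_{i \in I}$ is a basis of $V$; because the Gram matrix determines the rank of any sub-family (a linear relation $\sum c_i v_i = 0$ holds iff $\sum_{i,j} c_i c_j \,v_i^\top v_j = 0$, and the same criterion applies to the $w_i$), the same indices $I$ form a basis of $W$ and $\dim V = \dim W = |I|$. Define $R_a$ on $V$ by $R_a v_i = w_i$ for $i \in I$ and extend linearly; Gram matrix equality restricted to $I$ shows $R_a$ is an isometry from $V$ onto $W$. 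For any $j \notin I$, write $v_j = \sum_{i \in I} c_i v_i$; then $R_a v_j = \sum_{i \in I} c_i w_i$, and again by Gram matrix equality this equals $w_j$ (both vectors lie in $W$ and have identical inner products with the basis $\{w_i\}_{i \in I}$). To extend $R_a$ to all of $\mathbb{R}^d$, choose any linear isometry between the orthogonal complements $V^\perp$ and $W^\perp$, which is possible because they have equal dimension $d - |I|$; gluing the two pieces yields an element of $O(d)$.

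The only real subtlety in the plan is the step where the $v_i$ are linearly dependent: one must verify that the linear relations among the $v_i$ transfer to the $w_i$, and that the resulting $R_a$ admits an orthogonal extension beyond $V$. Both hurdles are resolved by the Gram matrix encoding dependencies through its null space and by the equality $\dim V^\perp = \dim W^\perp$. As a compact alternative, one could simply invoke Proposition \ref{thm: first fundamental theorem continuous version} with the empirical measure $\tfrac{1}{n}\sum_{i=1}^n \delta_{x_i}$ in place of the data distribution, but the elementary argument above is more self-contained in the finite-sample regime.
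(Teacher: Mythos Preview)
Your proof is correct and follows the same strategy as the paper: deduce equality of Gram matrices for almost every $a$, then conclude orthogonal equivalence of the two point clouds. The only difference is that the paper packages the second step as a direct appeal to Corollary~\ref{cor: first fundamental theorem continuous version} (the classical fact that $AA^\top = BB^\top$ implies $A = BR$ for some $R \in O(d)$), whereas you prove this inline via the basis-selection-and-extension argument; your version is thus more self-contained but otherwise equivalent.
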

As for the population counterpart, the proof of this result directly follows from the application of the First Fundamental Theorem of invariant theory for the orthogonal group.
\begin{proof}[Proof of Proposition \ref{prop: equivariance loss learns rotations finite sample version}]
    Suppose that $ \mathcal L_\text{equi}(f) =0$. This means that for almost every $a \in G$, and every $i,j = 1 , \ldots , n$ we have $f(a(x_j))^\top f(a(x_i)) = f(x_i) ^\top f(x_j)$. In other words $AA^T = BB^T$ where $A,B \in \mathbb{R}^{n \times d}$ are matrices whose $i$th rows are $A_i = f(a(x_i))^\top$  and  $B_i = f(x_i)^\top$  respectively. This implies, by the First Fundamental Theorem of invariant theory for the orthogonal group (see Corollary \ref{cor: first fundamental theorem continuous version}), that there is an $ R_{a} \in O(d)$ such that $A=B R_{a}$. Considering only the $i$th rows of $A$ and $B$ leads us to conclude that $f(a(x_i)) = R_{a}f(x_i)$. 
\end{proof}
A corollary of Proposition \ref{prop: equivariance loss learns rotations} is that compositions of augmentations correspond to compositions of rotations.
\corrgroup*
\begin{proof}
    Applying Proposition \ref{prop: equivariance loss learns rotations} on $a' \circ a$ as the sampled augmentation,   we have that $f(a' \circ a(x_i)) = R_{a' \circ a}f(x_i) = \rho(a' \circ a) f(x_i)$. However, taking $\bar{x} = a(x_i)$ and applying Proposition \ref{prop: equivariance loss learns rotations} twice we also know that $f(a' \circ a(x_i))= f(a' (\bar{x})) = R_{a} f(\bar{x}) =  R_{a'}f( a(x_i)) =  R_{a'} R_{a} f(x) = \rho (a') \rho(a) f(x_i)$. That is, $\rho(a' \circ a) f(x_i) = f(a' \circ a(x_i))=  \rho (a') \rho(a) f(x_i)$. Since this holds for all $i$, we have that   $\rho(a' \circ a) = \rho (a') \rho(a)$. 
\end{proof}

\par This corollary requires us to assume that $\mathcal{A}$ is a semi-group. That is, $\mathcal{A}$  is closed under compositions, but group elements do not necessarily have inverses and it does not need to include an identity element. 

\section{Background on invariance theory for the orthogonal group}\label{app: orthogonal theory}

This section recalls some classical theory on orthogonal groups and an extension that we use for proving results over continuous data distributions. 

A function $f : (\mathbb{R}^d)^n \rightarrow \mathbb{R}$ is said to be $O(d)$-invariant if $f(Rv_1, \ldots , Rv_n) = f(v_1, \ldots , v_n)$ for all $R \in O(d)$. Throughout this section, we are especially interested in determining easily computed statistics that \emph{characterize} an $O(d)$ invariant function $f$. In other words, we would like to write $f$ as a function of these statistics. The following theorem was first proved by Hermann Weyl using Capelli's identity \citep{weyl1946classical} and shows that the inner products $v_i^\top v_j$ suffice. 

\begin{thm}[First fundamental theorem of invariant theory for the orthogonal group]\label{thm: 1st fundamental thm of O(d)}
    Suppose that $f : (\mathbb{R}^d)^n \rightarrow \mathbb{R}$ is $O(d)$-invariant. Then there exists a function $g : \mathbb{R}^{n \times n} \rightarrow \mathbb{R}$ for which
    \begin{equation*}
        f(v_1, \ldots , v_n ) = g \big ( [v_i^\top v_j ]_{i,j=1}^n\big ).
    \end{equation*}
\end{thm}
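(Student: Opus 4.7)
The plan is to derive the theorem from an \emph{orbit-classification lemma}: two $n$-tuples of vectors in $\mathbb{R}^d$ lie in the same $O(d)$-orbit if and only if they have the same Gram matrix. Once this lemma is in hand, the theorem follows immediately. Let $\Gamma : (\mathbb{R}^d)^n \to \mathbb{R}^{n\times n}$ be the Gram map $\Gamma(v_1,\ldots,v_n) = [v_i^\top v_j]_{i,j=1}^n$. I will define $g$ on the image of $\Gamma$ by $g(\Gamma(v_1,\ldots,v_n)) := f(v_1,\ldots,v_n)$, which is well-defined because any two tuples mapping to the same Gram matrix are in the same $O(d)$-orbit, and $f$ is $O(d)$-invariant. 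Extending $g$ arbitrarily (e.g., by $0$) on the complement of $\Gamma((\mathbb{R}^d)^n)$ produces the required factorization $f(v_1,\ldots,v_n) = g([v_i^\top v_j])$.

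To prove the lemma, one direction is immediate: if $w_i = R v_i$ with $R \in O(d)$, then $w_i^\top w_j = v_i^\top R^\top R v_j = v_i^\top v_j$. For the nontrivial direction, given $(v_1,\ldots,v_n)$ and $(w_1,\ldots,w_n)$ with identical Gram matrices, I would construct the desired $R$ via a parallel Gram--Schmidt procedure. Running Gram--Schmidt on the ordered list $v_1,\ldots,v_n$ (skipping indices at which the current vector lies in the span of its predecessors) produces an orthonormal basis $u_1,\ldots,u_k$ of $\mathrm{span}(v_1,\ldots,v_n)$, together with coefficients expressing each $v_i$ in this basis. The crucial observation is that both the skip/keep decisions and the coefficients depend only on inner products $v_i^\top v_j$, hence only on the Gram matrix. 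Therefore, running the identical procedure on $w_1,\ldots,w_n$ produces orthonormal vectors $u'_1,\ldots,u'_k$ with each $w_i$ expressed in the $u'$-basis by the same coefficients. Extending both orthonormal lists to orthonormal bases of $\mathbb{R}^d$ (possible since $k \leq d$) and defining $R \in O(d)$ as the unique orthogonal map sending $u_l \mapsto u'_l$ for every $l$ yields $R v_i = w_i$ for all $i$.

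The main obstacle to keep clean is the handling of degeneracies: when $v_1,\ldots,v_n$ are linearly dependent or fail to span $\mathbb{R}^d$, one must verify that the Gram--Schmidt procedures on the two tuples branch identically. I would resolve this by noting that the rank $k = \dim\mathrm{span}(v_1,\ldots,v_n)$ equals $\mathrm{rank}[v_i^\top v_j]$, so the two tuples share the same dimension, and that the index at which a Gram--Schmidt residual vanishes is detected by the vanishing of an explicit polynomial (a Gram determinant) in the entries $v_i^\top v_j$. This guarantees the two procedures follow identical skip/keep patterns and so produce bases of the same size with matching coefficients. Note that this proof is purely set-theoretic; no continuity, measurability, or polynomial hypothesis on $f$ is required, which is convenient since the invocations of this theorem earlier in the paper (Propositions \ref{prop: equivariance loss learns rotations} and \ref{prop: equivariance loss learns rotations finite sample version}) only treat $f$ as an abstract $O(d)$-invariant function.
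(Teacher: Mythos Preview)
Your proposal is correct. The paper does not actually prove this theorem; it states it as a classical result and attributes the original proof to Weyl via Capelli's identity \citep{weyl1946classical}. That classical proof is algebraic and is really about \emph{polynomial} invariants (the ring of $O(d)$-invariant polynomials on $(\mathbb{R}^d)^n$ is generated by the inner products). The version stated in the paper, however, is purely set-theoretic---$f$ is an arbitrary function---and for that version your orbit-classification argument is both sufficient and arguably the natural proof.

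There is also an interesting structural reversal worth noting. The paper presents the orbit-classification statement (Corollary~\ref{cor: first fundamental theorem continuous version}: $AA^\top = BB^\top$ implies $A = BR$ for some $R \in O(d)$) as a \emph{consequence} of the First Fundamental Theorem. You instead prove the orbit classification directly via parallel Gram--Schmidt and then derive the theorem from it. For the set-theoretic statement this direction is cleaner and entirely self-contained, avoiding any appeal to invariant theory machinery. Your handling of the degenerate (linearly dependent or non-spanning) case via Gram determinants is the right idea and closes the only potential gap.
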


In other words, to compute $f$ at a given input, it is not necessary to know all of $v_1, \ldots, v_n$. Computing the value of $f$ at a point can be done using only  the inner products $v_i^\top v_j$, which are invariant to $O(d)$. Letting $V$ be the $n \times d$ matrix whose $i$th row is $v_i^\top$, we may also write $f(v_1, \ldots , n_n ) = g(VV^\top)$. The map $V \mapsto VV^\top$ is known as the orthogonal projection of $V$. 

A corollary of this result has recently been used to develop $O(d)$ equivariant architectures in machine learning \citep{villar2021scalars}.
\begin{cor}\label{cor: first fundamental theorem continuous version}
    Suppose that $A,B$ are $n \times d$ matrices and $AA^\top = BB^\top$. Then $A=BR$ for some $R \in O(d)$.
\end{cor}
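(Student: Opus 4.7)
The plan is to build $R$ by first constructing a linear isometry between the row span of $B$ and the row span of $A$ inside $\mathbb{R}^d$, and then extending it orthogonally to all of $\mathbb{R}^d$. Write the rows of $B$ and $A$ as column vectors $b_1, \ldots, b_n$ and $a_1, \ldots, a_n$ in $\mathbb{R}^d$, so that $(BB^\top)_{ij} = b_i^\top b_j$ and similarly for $A$. The hypothesis $AA^\top = BB^\top$ then says exactly that the two configurations of points have identical Gram matrices.

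First, I would define a candidate map $T$ on $V := \mathrm{span}\{b_1, \ldots, b_n\}$ by $T(b_i) = a_i$ and extending linearly. The well-definedness check reduces to showing: if $\sum_i c_i b_i = 0$, then $\sum_i c_i a_i = 0$. This is immediate from the identity $\|\sum_i c_i a_i\|^2 = c^\top AA^\top c = c^\top BB^\top c = \|\sum_i c_i b_i\|^2$. The same calculation with two arbitrary coefficient vectors in place of a single $c$ shows that $\langle Tu, Tv\rangle = \langle u,v\rangle$ for all $u,v \in V$, so $T$ is a linear isometry from $V$ onto $W := \mathrm{span}\{a_1, \ldots, a_n\}$; in particular $\dim V = \dim W$.

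Second, since the orthogonal complements $V^\perp$ and $W^\perp$ in $\mathbb{R}^d$ then have equal dimension, I would pick any orthonormal basis of $V^\perp$ and any orthonormal basis of $W^\perp$ and extend $T$ to a linear map $M : \mathbb{R}^d \to \mathbb{R}^d$ by sending the first basis to the second. The resulting $M$ preserves inner products on all of $\mathbb{R}^d$, hence is represented by an orthogonal matrix, and by construction satisfies $M b_i = a_i$ for every $i$. Translating back to matrix form, $MB^\top = A^\top$, so $A = BM^\top$, and setting $R = M^\top \in O(d)$ completes the argument.

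The only real obstacle is verifying that $T$ is well-defined as a function of the vector $b_i$ rather than of the index $i$, and that $V^\perp$ and $W^\perp$ have matching dimensions so the orthogonal extension exists; both facts are immediate consequences of the single Gram-matrix identity $AA^\top = BB^\top$, so the argument is essentially automatic once this linear-algebraic setup is in place.
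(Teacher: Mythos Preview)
Your proof is correct. The paper, however, does not give an explicit proof of this corollary at all: it simply states the result as a consequence of the First Fundamental Theorem of invariant theory for $O(d)$ (the preceding theorem, due to Weyl), which says every $O(d)$-invariant function of $n$ vectors factors through the Gram matrix $VV^\top$, and cites Villar et al.\ for its use in machine learning. So the paper's ``proof'' is essentially an appeal to the fact that the Gram matrix is a complete invariant for the $O(d)$-action.

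Your route is genuinely different and more elementary: rather than invoking invariant theory, you construct the orthogonal matrix $R$ directly by defining a linear isometry $T$ from the row span of $B$ to the row span of $A$ (well-definedness and isometry both following from the single Gram-matrix identity), and then extending across orthogonal complements of equal dimension. This buys you a fully self-contained linear-algebra argument that does not rely on Weyl's theorem. The paper's framing, on the other hand, situates the corollary inside a broader invariant-theoretic picture that it also uses to motivate the extensions to other groups (Lorentz, symplectic) discussed later.
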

 \cite{villar2021scalars} use this characterization of orthogonally equivariant functions to \emph{parameterize} function classes of neural networks that have the same equivariance. This result is also useful in our context; However, we put it to use for a very different purpose: studying $\mathcal L_\text{equi}$.

Intuitively this result says the following: given two point clouds $A,B$ of unit length vectors with some fixed correspondence (bijection) between each point in $A$ and a point in $B$, if the \emph{angles} between the $i$th and $j$th points in cloud $A$ always equal the angle between the $i$th and $j$th point in cloud $B$, then $A$ and $B$ are the same up to an orthogonal transformation.

This is the main tool we use to prove the finite sample version of the main result for our equivariant loss (Proposition \ref{prop: equivariance loss learns rotations finite sample version}). However, to analyze the population sample loss $\mathcal L_\text{equi}$ (Proposition \ref{prop: equivariance loss learns rotations}), we require an extended version of this result to the continuous limit as $n \rightarrow \infty$. To this end, we develop a simple but novel extension to Theorem \ref{thm: 1st fundamental thm of O(d)} to the case of continuous data distributions. This result may be useful in other contexts independent of our setting.  

\begin{prop}\label{thm: first fundamental theorem continuous version}
    Let $\mathcal X$ be any set and $f,h: \mathcal X \rightarrow \mathbb{R}^d$ be functions on $\mathcal X$. If $f(x)^\top f(y) = h(x)^\top h(y)$ for all $x,y \in \mathcal X$, then there exists $R \in O(d)$ such that $Rf(x)=h(x)$ for all $x \in \mathcal X$.
\end{prop}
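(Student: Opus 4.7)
The plan is to reduce the problem to finite-dimensional linear algebra by working with the subspaces $V_f := \operatorname{span}\{f(x) : x \in \mathcal X\}$ and $V_h := \operatorname{span}\{h(x) : x \in \mathcal X\}$, both finite-dimensional since they sit in $\mathbb R^d$. The goal is to build a linear isometry $R_0 : V_f \to V_h$ satisfying $R_0 f(x) = h(x)$ for every $x \in \mathcal X$, and then extend it to an orthogonal transformation of $\mathbb R^d$.

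First I would select a maximal linearly independent subset $f(x_1), \ldots, f(x_k) \subseteq f(\mathcal X)$, giving a basis of $V_f$; its Gram matrix $G := [\langle f(x_i), f(x_j)\rangle]_{i,j}$ is invertible, and by hypothesis $G$ is also the Gram matrix of $h(x_1), \ldots, h(x_k)$, so these vectors are linearly independent as well. For an arbitrary $x \in \mathcal X$, let $c_1, \ldots, c_k$ solve the linear system $\sum_i c_i \langle f(x_j), f(x_i) \rangle = \langle f(x_j), f(x)\rangle$, so that $f(x) = \sum_i c_i f(x_i)$. The crux of the argument is the observation that $\|h(x) - \sum_i c_i h(x_i)\|^2$, when expanded into inner products of the $h$'s and rewritten via the hypothesis as the corresponding inner products of the $f$'s, equals $\|f(x) - \sum_i c_i f(x_i)\|^2 = 0$. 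Hence $h(x) = \sum_i c_i h(x_i)$, which simultaneously shows that $\{h(x_i)\}$ spans $V_h$ (so forms a basis) and that $h(x)$ admits the same basis coefficients as $f(x)$.

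Next, define $R_0 : V_f \to V_h$ linearly by $R_0 f(x_i) = h(x_i)$; by the coefficient matching above, $R_0 f(x) = h(x)$ for every $x \in \mathcal X$. Since the images of the basis have the same Gram matrix $G$, the map $R_0$ preserves inner products, i.e., is a linear isometry, and in particular $\dim V_f = k = \dim V_h$. Consequently $\dim V_f^\perp = d - k = \dim V_h^\perp$, so picking any orthonormal bases of these orthogonal complements and mapping one to the other extends $R_0$ to an orthogonal transformation $R \in O(d)$ with $R f(x) = h(x)$ for all $x$.

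The main (mild) subtlety is the norm-expansion trick in the middle step: without it, one only knows the residual $h(x) - \sum_i c_i h(x_i)$ is orthogonal to every $h(x_j)$, which does not a priori force it to vanish, since $V_h$ is not yet known to be spanned by the $h(x_j)$'s. Once this is in hand, the rest is routine bookkeeping; notably, the argument is purely algebraic and requires no topological or measure-theoretic hypotheses on $\mathcal X$.
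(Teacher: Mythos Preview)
Your proof is correct. The overall architecture---pick finitely many $x_i$ so that the $f(x_i)$ span $V_f$, show the $h(x_i)$ then span $V_h$, and reduce the arbitrary-$x$ statement to this finite set---matches the paper's. Where you diverge is in the mechanics. The paper invokes the finite-sample result (Corollary~\ref{cor: first fundamental theorem continuous version}) as a black box: once on the chosen $x_1,\ldots,x_n$ to produce an $R\in O(d)$, and a second time on the augmented set $x_1,\ldots,x_n,x$ to produce an auxiliary $\tilde R$, then uses a short chain of equalities to show $R^\top h(x)=f(x)$ without ever matching coefficients. You instead build the orthogonal map by hand: the norm-expansion trick gives you identical basis coefficients for $f(x)$ and $h(x)$, which lets you define $R_0$ on $V_f$ directly and then extend across the orthogonal complements. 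Your route is more self-contained (no appeal to Weyl's theorem or its corollary) and slightly cleaner about why the same $x_i$ work for both $f$ and $h$; the paper's route is more modular, outsourcing the existence of the global $R$ to the finite case so no complement-extension step is needed.
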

The proof of this result directly builds on the finite sample version. The key idea of the proof is that since the embedding space $\mathbb{R}^d$ is finite-dimensional we may select a set of points $\{f(x_i)\}_i$ whose span has maximal rank in the linear space spanned by the outputs of $f$. This means that any arbitrary point $f(x)$ can be written as a linear combination of the $f(x_i)$. This observation allows us to apply the finite sample result on each $f(x_i)$ term in the sum to conclude that $f(x)$ is also a rotation of a sum of $h(x_i)$ terms. Next, we give the formal proof.
\begin{proof}[Proof of Proposition \ref{thm: first fundamental theorem continuous version}]
    Choose $x_1, \ldots , x_n \in \mathcal X$ such that $F = [f(x_1) \mid \ldots \mid f(x_n)]^\top \in \mathbb{R}^{n \times d}$ and  $h = [h(x_1) \mid \ldots \mid h(x_n)]^\top \in \mathbb{R}^{n \times d}$ have maximal  rank. Note we use ``$\mid$'' to denote the column-wise concatenation of vectors. Note that such $x_i$ can always be chosen. Since we have $FF^\top =  HH^\top$, we know by Corollary \ref{cor: first fundamental theorem continuous version} that $F=HR$  for some  $R \in O(d)$. 
    
    Now consider an arbitrary $x \in \mathcal X$ and define $\tilde{F} = [F \mid f(x)]^\top $ and $\tilde{H} = [H \mid h(x)]^\top $, both of which belong to $\mathbb{R}^{(n+1) \times d}$. Note that again we have $\tilde{F}\tilde{F}^\top =  \tilde{H}\tilde{H}^\top$ so also know that $\tilde{F}=\tilde{H}\tilde{R}$  for some  $\tilde{R} \in O(d)$.  Since $x_i$ were chosen so that $F$ and $H$ are of maximal rank, we know that $h(x) = \sum_{i=1}^n c_i  h(x_i)$ for some  coefficients $c_i \in \mathbb{R}$, since if this were not the case then we would have $\text{rank}(\tilde{H}) = \text{rank}(H)+1$.

    From this, we know that
    
    \begin{align*}
        R^\top h(x)  &=  \sum_{i=1}^n c_i  R^\top h(x_i) \\
                    &=   \sum_{i=1}^n c_i f(x_i) \\
                    &=  \sum_{i=1}^n c_i  \tilde{R}^\top h(x_i) \\
                    &= \tilde{R}^\top \sum_{i=1}^n c_i  h(x_i) \\
                      &= \tilde{R}^\top h(x) \\
                      &= f(x).
    \end{align*}

    So we have that $Rf(x) = RR^\top h(x) = h(x)$  for all $x \in \mathcal X$.
\end{proof}

\section{Extensions to other groups: further discussion} \label{app: extension to other groups}
In Section \ref{sec: extension to other groups}, we explore the possibility of formulating an equivariant loss $\mathcal L_{\text{equi}}$ for pairs of points that fully captures equivariance by requiring the group to be the stabilizer of a bilinear form. In this context, the invariants are generated by polynomials of degree two in two variables, and the equivariant functions can be obtained by computing gradients of these invariants \citep{blum2022equivariant}. Section \ref{sec: extension to other groups} notes that this holds true not only for the orthogonal group, which is the primary focus of our research but also for the Lorentz group and the symplectic group, suggesting natural extensions of our approach. 

\par It is worth noting that the group of rotations $SO(d)$ does not fall into this framework. It can be defined as the set of transformations that preserve both inner products (a 2-form) and determinants (a $d$-form). Consequently, some of its generators have degree 2 while others have degree $d$ (see \citep{weyl1946classical}, Section II.A.9).
\par Weyl's theorem states that if a group acts on $n$ copies of a vector space (in our case, $(\mathbb R^d)^n$ for consistency with the rest of the paper), its action can be characterized by examining how it acts on $k$ copies (i.e., $(\mathbb R^d)^k$) when the maximum degree of its irreducible components is $k$ (refer to Section 6 of \citep{schmid2006finite} for a precise statement of the theorem). Since our interest lies in understanding equivariance in terms of pairs of objects, we desire invariants that act on pairs of points. One way to guarantee this is to restrict ourselves to groups that act through representations where the irreducible components have degrees of at most two (though this is not necessary in all cases, such as the orthogonal group $O(d)$ that we consider in the main paper). An example of such groups is the product of finite subgroups of the unitary group $U(2)$, which holds relevance in particle physics. According to Weyl's theorem, the corresponding invariants can be expressed as \emph{polarizations} of degree-2 polynomials on two variables. Polarizations represent an algebraic construction that enables the expression of homogeneous polynomials in multiple variables by introducing additional variables to polynomials with fewer variables. In our case, the base polynomials consist of degree-2 polynomials in two variables, while the polarizations incorporate additional variables. Notably, an interesting open problem lies in leveraging this formulation for contrastive learning.

\section{Implementation details}\label{sec:algo_pseudocode}
Algorithm \ref{alg:main_algorithm} presents pytorch-based pseudocode for implementing \care. This implementation introduces the idea of using a smaller batch size for the equivariance loss compared to the InfoNCE loss. Specifically, by definition, the equivariance loss is defined as a double expectation, one over data pairs and the other over augmentations. Empirical observations reveal that sampling one augmentation per batch leads to unstable yet superior performance when compared to standard invariant-based baselines such as SimCLR. Since these invariant-based contrastive benchmarks generally perform well with large batch sizes, we adopt the approach of splitting a batch into multiple chunks to efficiently sample multiple augmentations per batch for the equivariance loss. Each chunk of the batch is associated with a new pair of augmentations, ensuring a large batch size for the InfoNCE loss and a smaller batch size for the equivariance loss.

\begin{algorithm}[htb!]
\small
\caption[]{PyTorch based pseudocode for \care}
\label{alg:main_algorithm}
\begin{algorithmic}[1]
\State \textbf{Notations: } $f$ represents the backbone encoder network, $\lambda$ is the weight on \care loss, $\texttt{apply\_same\_aug}$ function applies the same augmentation to all samples in the input batch

\For {minibatch $x$ in $\texttt{dataloader}$}
    \State draw \emph{two batches} of augmentation functions ${a}_{1}, {a}_{2} \in \mathcal{A}$
    \State \Comment{Functions ${a}_{1}, {a}_{2}$ apply different augmentation to each sample in batch $x$}
    \State $z^{\text{inv}}_1, z^{\text{inv}}_2 = f(a_1(x)), f(a_2(x))$
    \State divide $x$ into $\texttt{n\_split}$ chunks to form $x_{\text{chunks}}$ 
    \State \Comment{Module for calculating orthogonal equivariance loss}
    \For {$c_i$ in $x_{\text{chunks}}$ in parallel }
        \State draw \emph{two} augmentation functions $\tilde{a}_{1}, \tilde{a}_{2} \in \mathcal{A}$
        \State \Comment{Functions $\tilde{a}_{1}, \tilde{a}_{2}$ apply same augmentation to each sample in batch $c_i$}
        \State $\tilde{z}_{i1}, \tilde{z}_{i2} = f(\texttt{apply\_same\_aug}(c_i, \tilde{a}_{1})), f(\texttt{apply\_same\_aug}(c_i, \tilde{a}_{2}))$ 
    \EndFor
    \State \Comment{Concatenate embedding vectors corresponding to all chunks}
    \State merge $\tilde{z}_{i1}, \tilde{z}_{i2}$ into $z^{\text{equiv}}_{1}, z^{\text{equiv}}_{2}$ respectively 
    \State \Comment{Loss computation}
    
    \State $\mathcal{L}_{\text{InfoNCE}}(f) = \texttt{infonce\_loss}(z^{\text{inv}}_1, z^{\text{inv}}_2)$ 
    \State $\mathcal{L}_{\text{equiv}}(f) = \texttt{orthogonal\_equivariance\_loss}(z^{\text{equiv}}_{1}, z^{\text{equiv}}_{2}, \texttt{n\_split})$
    \State $\mathcal{L}_{\text{\care}}(f) = \mathcal{L}_{\text{InfoNCE}}(f) + \lambda \cdot \mathcal{L}_{\text{equiv}}(f)$
    \State \Comment{Optimization step}
    \State $\mathcal{L}_{\text{\care}}(f)$.backward()
    \State  optimizer.step()

\EndFor
\end{algorithmic}
\end{algorithm}

\section{Supplementary experimental details and assets disclosure}\label{app: experimental details}

\subsection{Assets} We do not introduce new data in the course of this work. Instead, we use publicly available widely used image datasets for the purposes of benchmarking and comparison.

\subsection{Hardware and setup}
All experiments were performed on an HPC computing cluster using 4 NVIDIA Tesla V100 GPUs with 32GB accelerator RAM for a single training run. The CPUs used were Intel Xeon Gold 6248 processors with 40 cores and 384GB RAM. All experiments use the
PyTorch deep learning framework \citep{paszke2019pytorch}.

\subsection{Experimental protocols}\label{app: precise_exp_deets}
We first outline the training protocol adopted for training our proposed approach on a variety of datasets, namely CIFAR10, CIFAR100, STL10, and ImageNet100.

\par \noindent \textbf{CIFAR10, CIFAR100 and STL10}\quad All encoders have ResNet-50 backbones and are trained for 400 epochs with temperature $\tau=0.5$ for SimCLR and $\tau=0.1$ for MoCo-v2 \footnote{\url{https://github.com/facebookresearch/moco}}. The  encoded features have a dimension of 2048 and are further processed by a two-layer MLP projection head, producing an output dimension of 128. A batch size of 256 was used for all datasets. For CIFAR10 and CIFAR100, we employed the Adam optimizer with a learning rate of $1e^{-3}$ and weight decay of $1e^{-6}$. For STL10, we employed the SGD optimizer with a learning rate of $0.06$, utilizing cosine annealing and a weight decay of $5e^{-4}$, with 10 warmup steps. We use the same set of augmentations as in SimCLR \citep{simclr}. To train the encoder using $\mathcal{L}_{\text{\care-SimCLR}}$, we use the same hyper-parameters for InfoNCE loss. Additionally, we use 4, 8 and 16 batch splits for CIFAR100, STL10 and CIFAR10, respectively. This allows us to sample multiple augmentations per batch, effectively reducing the batch size of equivariance loss whilst retaining the same for InfoNCE loss. Furthermore, for the equivariant term, we find it optimal to use a weight of $\lambda=0.01, 0.001$, and $0.01$ for CIFAR10, CIFAR100, and STL10, respectively. \\

\noindent\textbf{ImageNet100}\quad We use ResNet-50 as the encoder architecture and pretrain the model for 200 epochs. A base learning rate of 0.8 is used in combination with cosine annealing scheduling and a batch size of 512. For MoCo-v2, we use 0.99 as the momentum and $\tau=0.2$ as the temperature. All remaining hyperparameters were maintained at their respective official defaults as in the official MoCo-v2 code. While training with $\mathcal{L}_{\text{\care-SimCLR}}$ and $\mathcal{L}_{\text{\care-MoCo}}$, we find it optimal to use splits of 4 and 8 and weight of $\lambda=0.005$ and $0.01$ respectively on the equivariant term. 

\noindent\textbf{Linear evaluation}\quad We train a linear classifier on frozen features for 100 epochs with a batch size of 512 for CIFAR10, CIFAR100, and STL10 datasets. To optimize the classifier, we employ the Adam optimizer with a learning rate of $1e^{-3}$ and a weight decay of $1e^{-6}$. In the case of ImageNet100, we train the linear classifier for 60 epochs using a batch size of 128. We initialize the learning rate to 30.0 and apply a step scheduler with an annealing rate of 0.1 at epochs 30, 40, and 50. The remaining hyper-parameters are retained from the official code.

\section{Additional experiments}
\textbf{Histogram for loss ablation.} To accompany Figure \ref{fig: losses ablation}, this section plots the cosine similarity between positive pairs. We provide two plots for each experiment: the first plots the \emph{histogram} of similarities of positive pairs drawn from the test set; the second plots the \emph{average} positive cosine similarity throughout training. The results are reported in Figures \ref{fig:plot1}, \ref{fig:plot2}, \ref{fig:plot3}, \ref{fig:plot4}, \ref{fig:plot5}, \ref{fig:plot6}. 

\begin{figure}[htbp]
    \centering
    \begin{minipage}{0.35\textwidth}
        \centering
        \includegraphics[width=\linewidth]{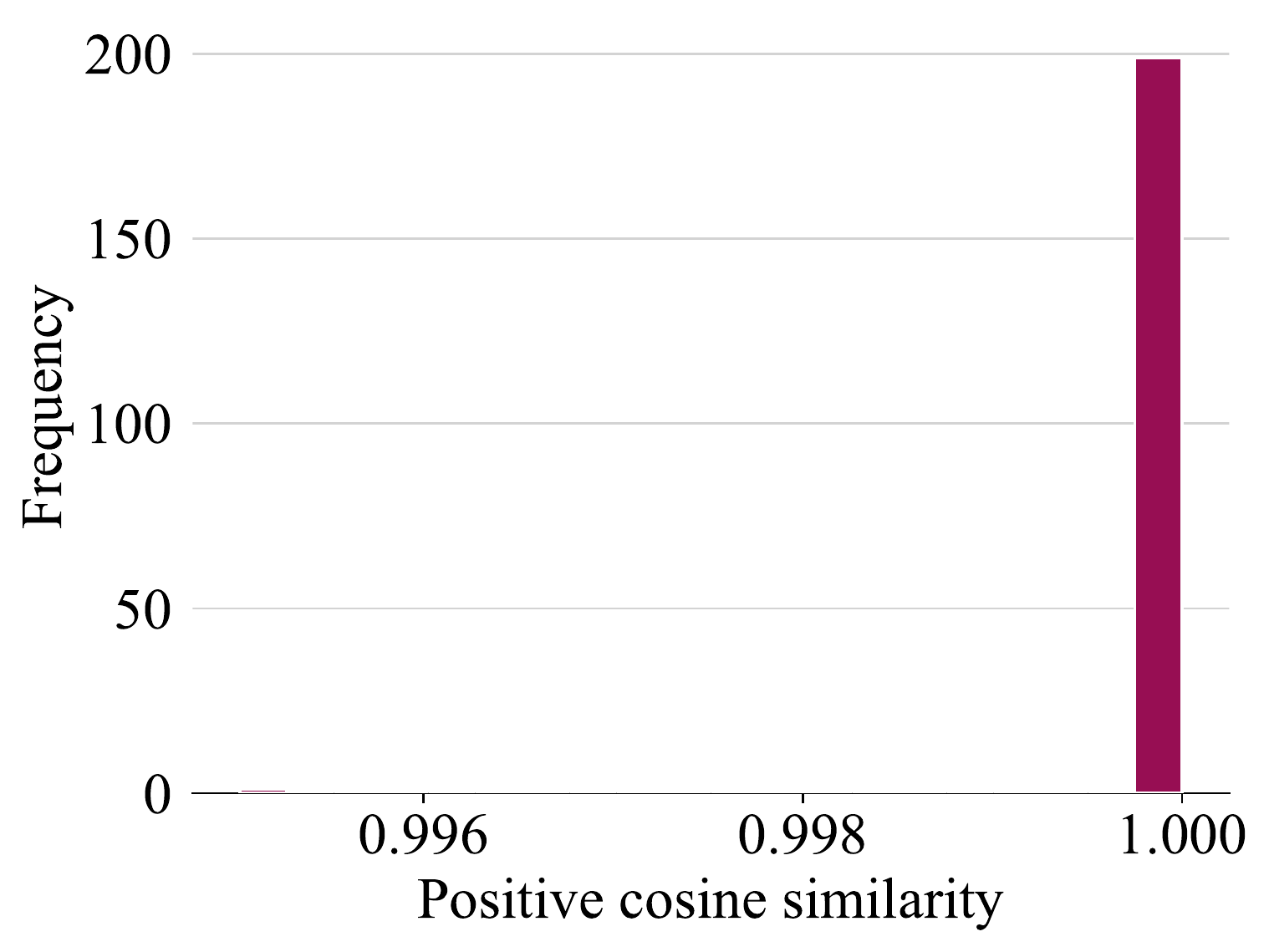}
    \end{minipage}%
    \begin{minipage}{0.35\textwidth}
        \centering    \includegraphics[width=\linewidth]{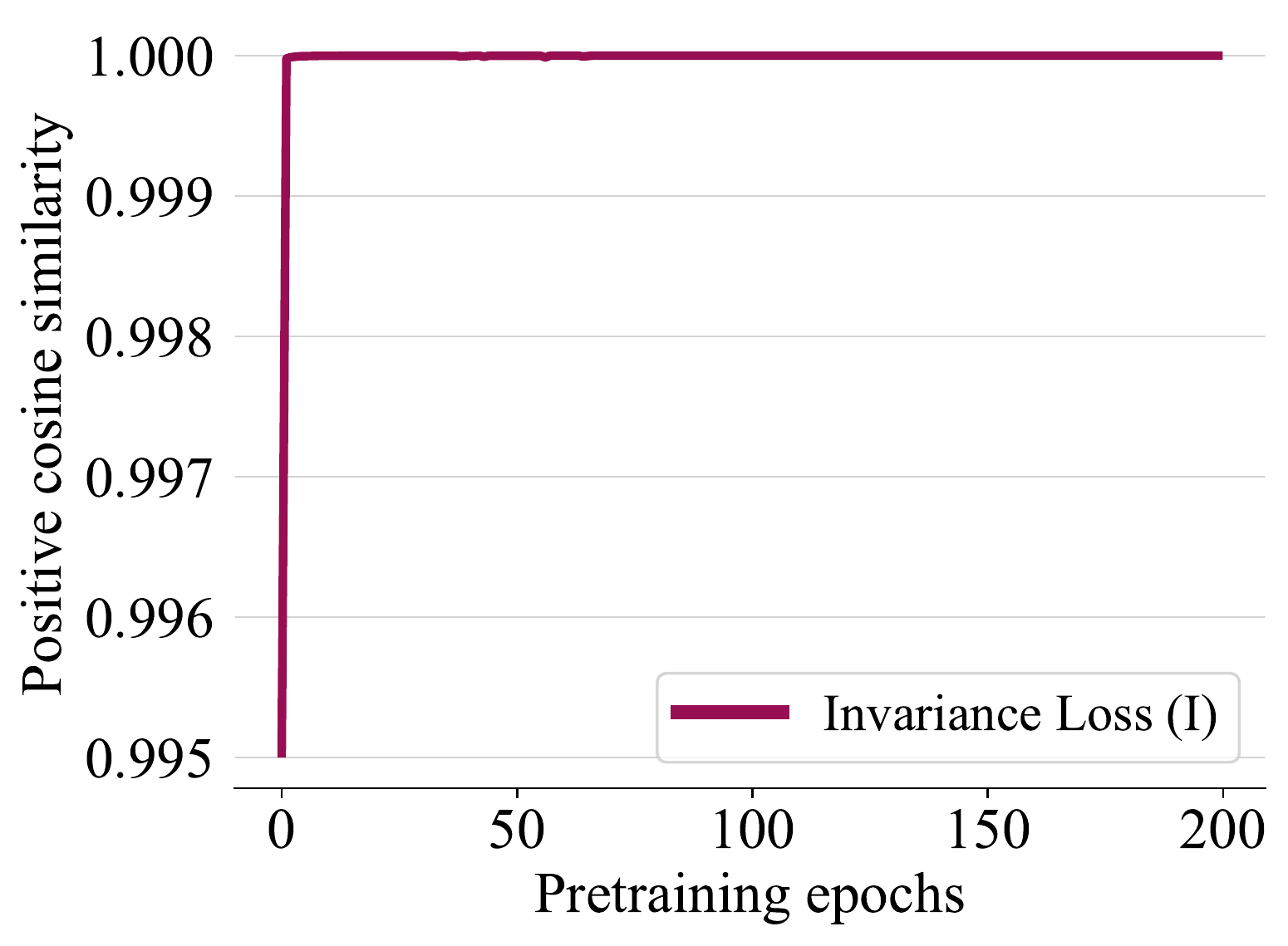}
    \end{minipage}
    \caption{(left) Histogram of positive cosine similarity values at the end of pre-training using the invariance loss; (right) Evolution of positive cosine similarity values over pre-training epochs using the invariance loss}
    \label{fig:plot1}
\end{figure}

\begin{figure}[htbp]
    \centering
    \begin{minipage}{0.35\textwidth}
        \centering      \includegraphics[width=\linewidth]{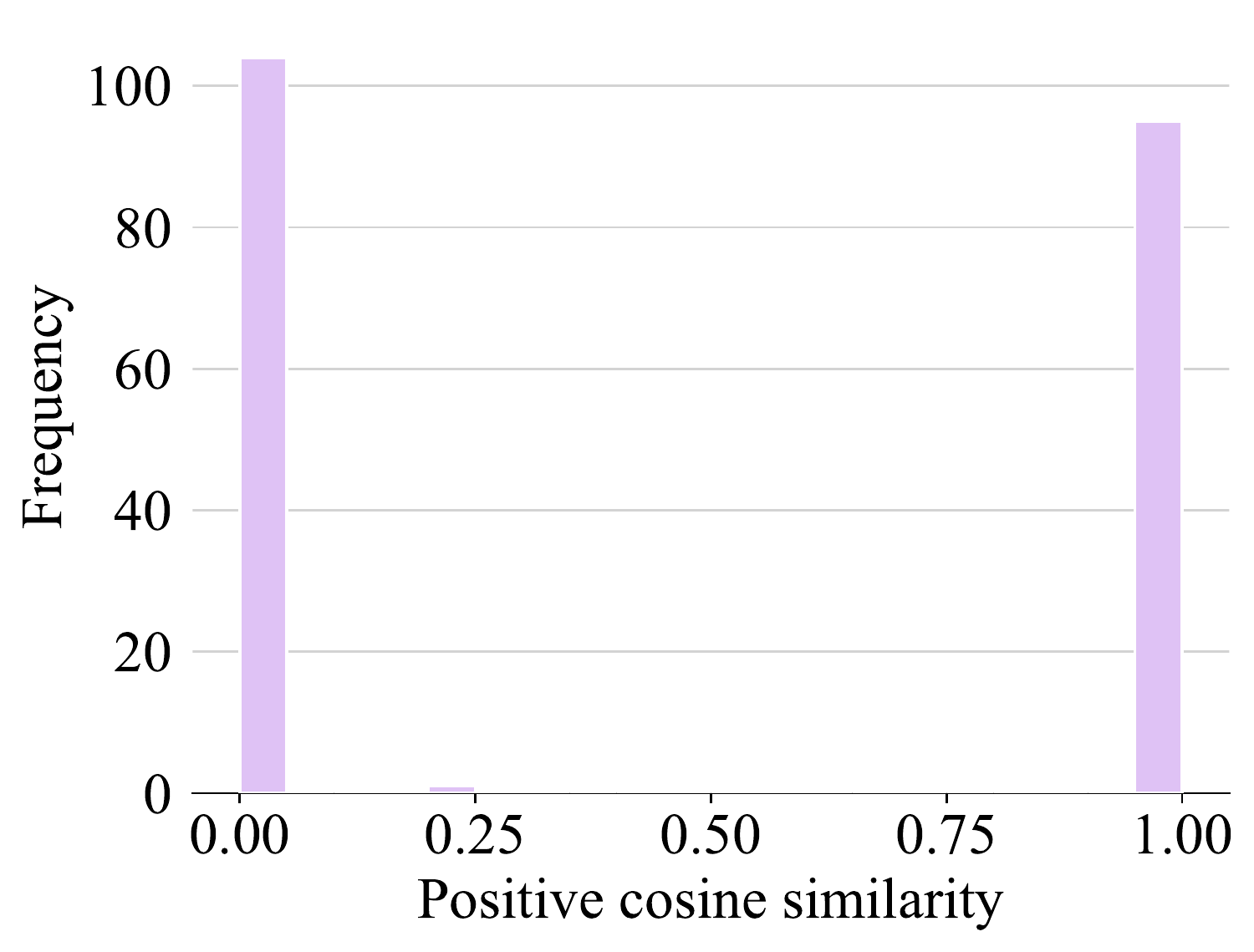}
    \end{minipage}%
    \begin{minipage}{0.35\textwidth}
        \centering
        \includegraphics[width=\linewidth]{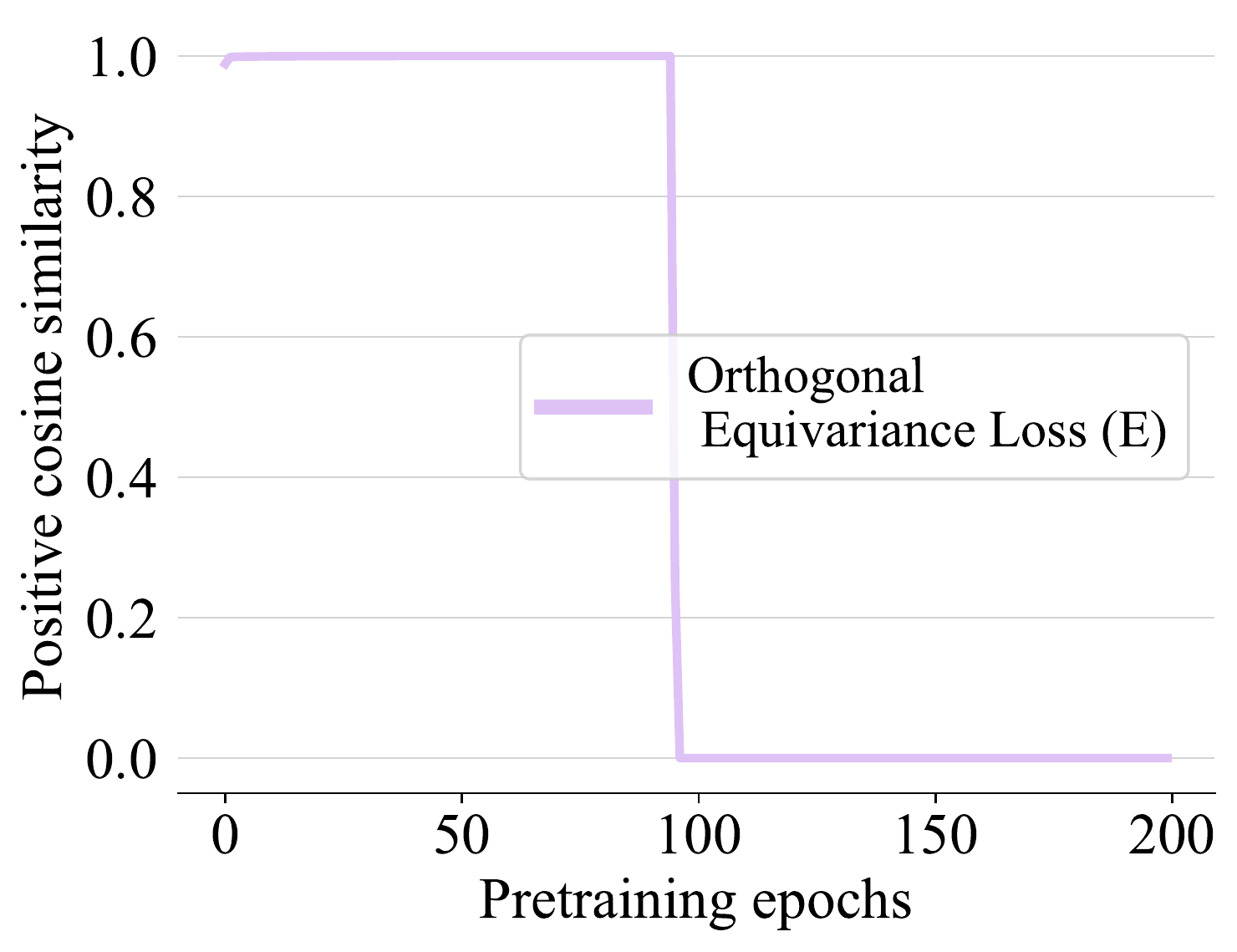}
    \end{minipage}
    \caption{(left) Histogram of positive cosine similarity values at the end of pre-training using the orthogonal equivariance loss; (right) Evolution of positive cosine similarity values over pre-training epochs using the orthogonal equivariance loss }
    \label{fig:plot2}
\end{figure}

\begin{figure}[htbp]
    \centering
    \begin{minipage}{0.35\textwidth}
        \centering        \includegraphics[width=\linewidth]{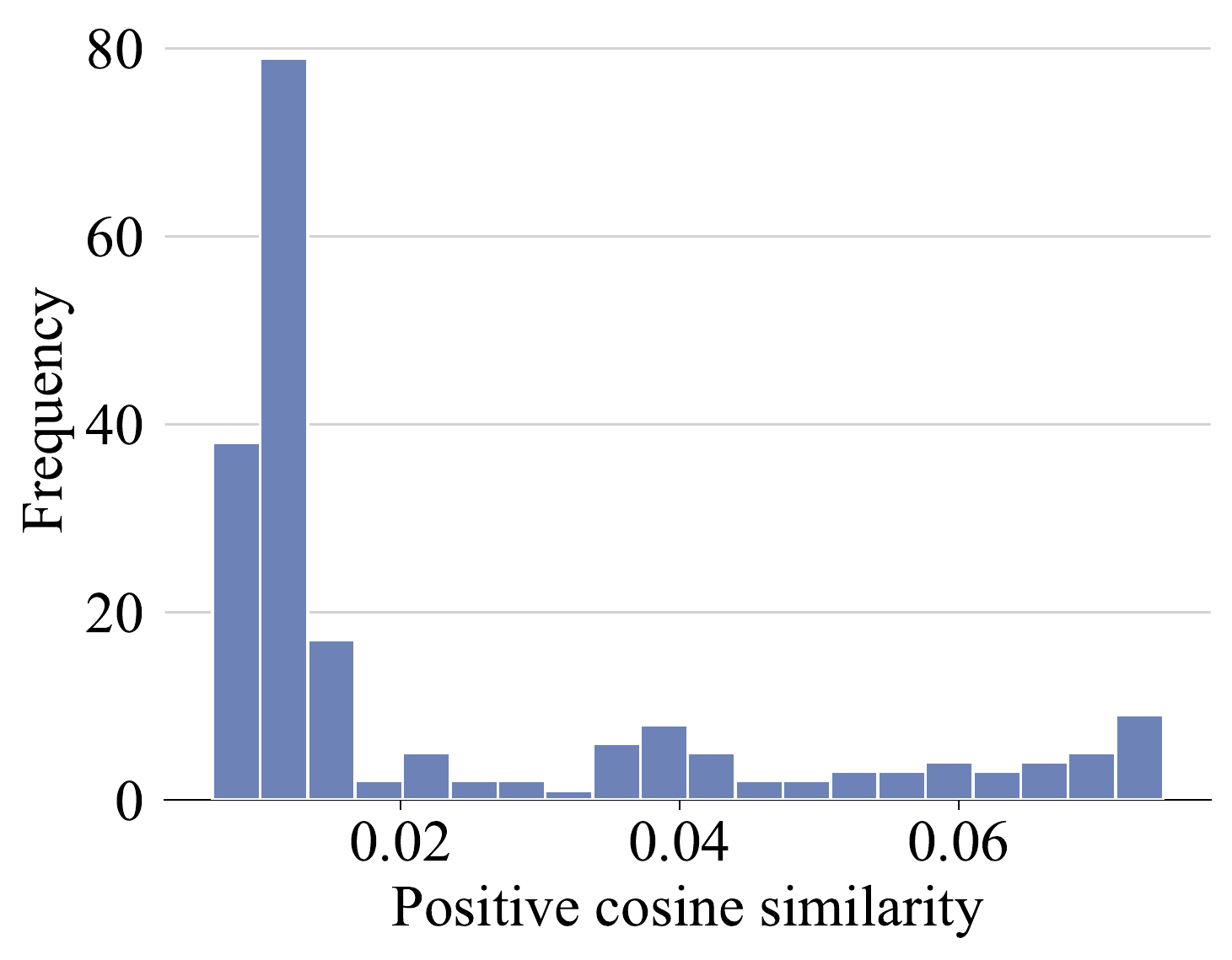}
    \end{minipage}%
    \begin{minipage}{0.35\textwidth}
        \centering        \includegraphics[width=\linewidth]{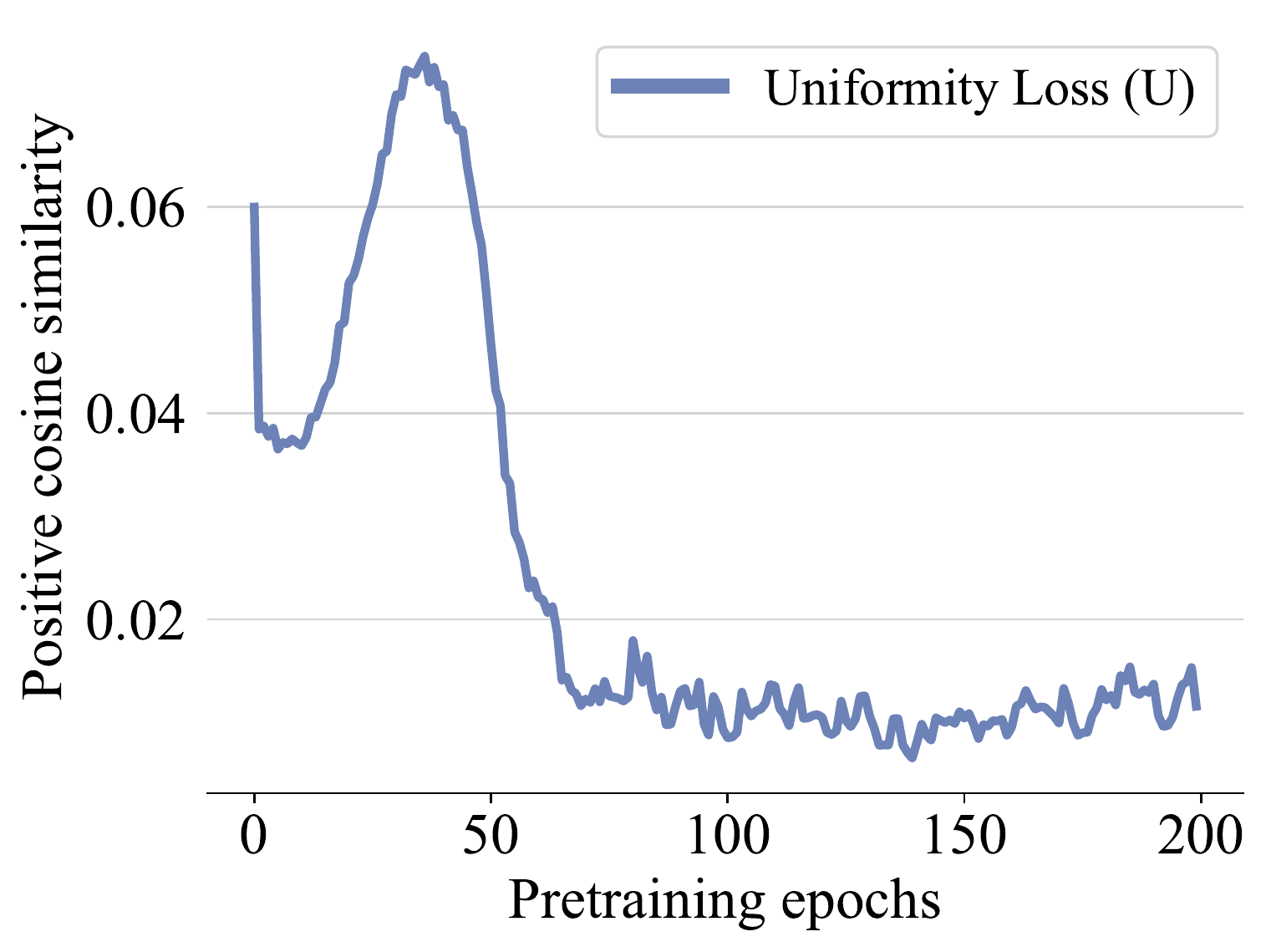}
    \end{minipage}
    \caption{(left) Histogram of positive cosine similarity values at the end of pre-training using the uniformity loss; (right) Evolution of positive cosine similarity values over pre-training epochs using the uniformity loss }
    \label{fig:plot3}
\end{figure}

\begin{figure}[htbp]
    \centering
    \begin{minipage}{0.35\textwidth}
        \centering        \includegraphics[width=\linewidth]{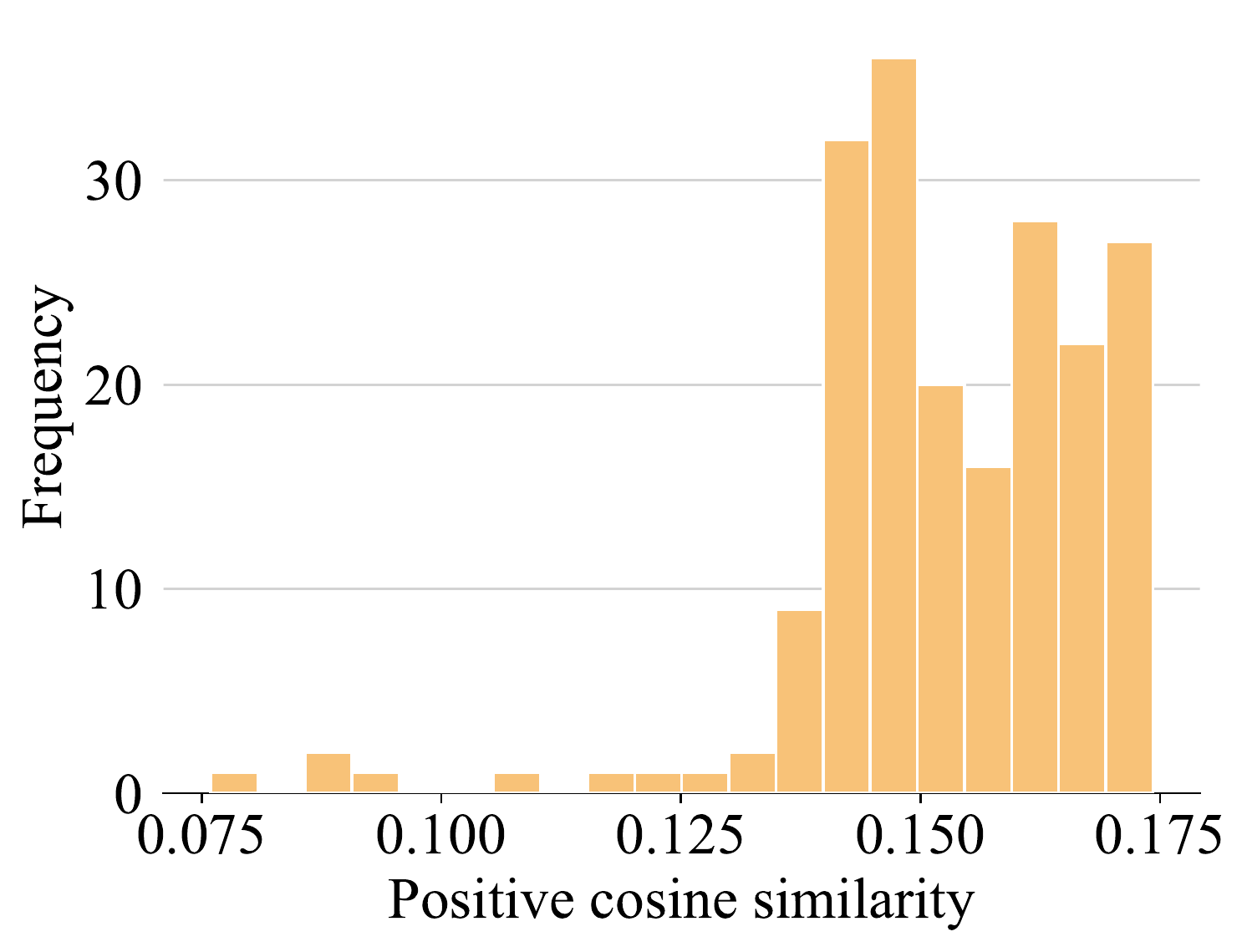}
    \end{minipage}%
    \begin{minipage}{0.35\textwidth}
        \centering        \includegraphics[width=\linewidth]{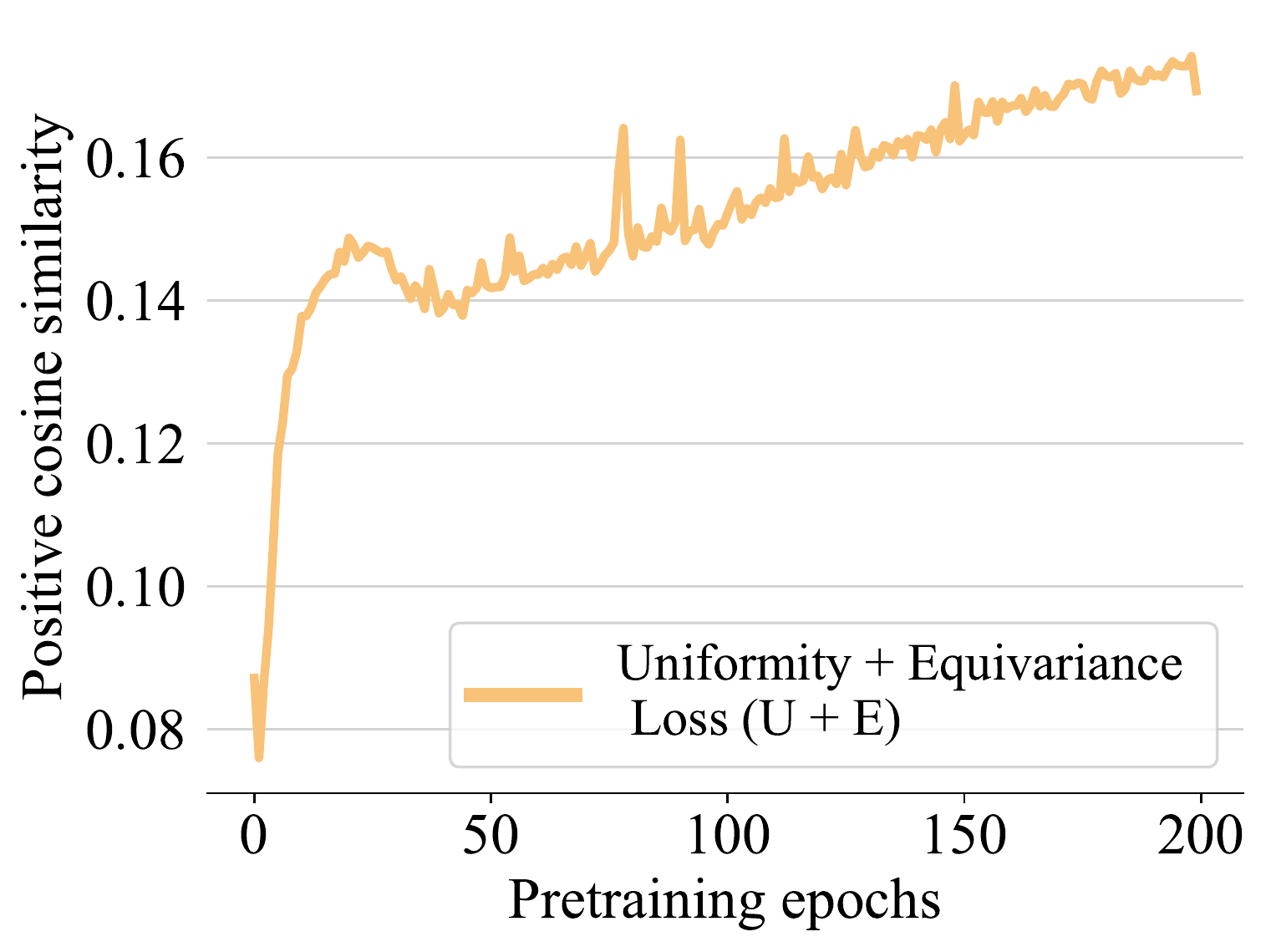}
    \end{minipage}
    \caption{(left) Histogram of positive cosine similarity values at the end of pre-training using the Uniformity + Equivariance loss; (right) Evolution of positive cosine similarity values over pre-training epochs using the Uniformity + Equivariance loss }
    \label{fig:plot4}
\end{figure}

\begin{figure}[htbp]
    \centering
    \begin{minipage}{0.35\textwidth}
        \centering        \includegraphics[width=\linewidth]{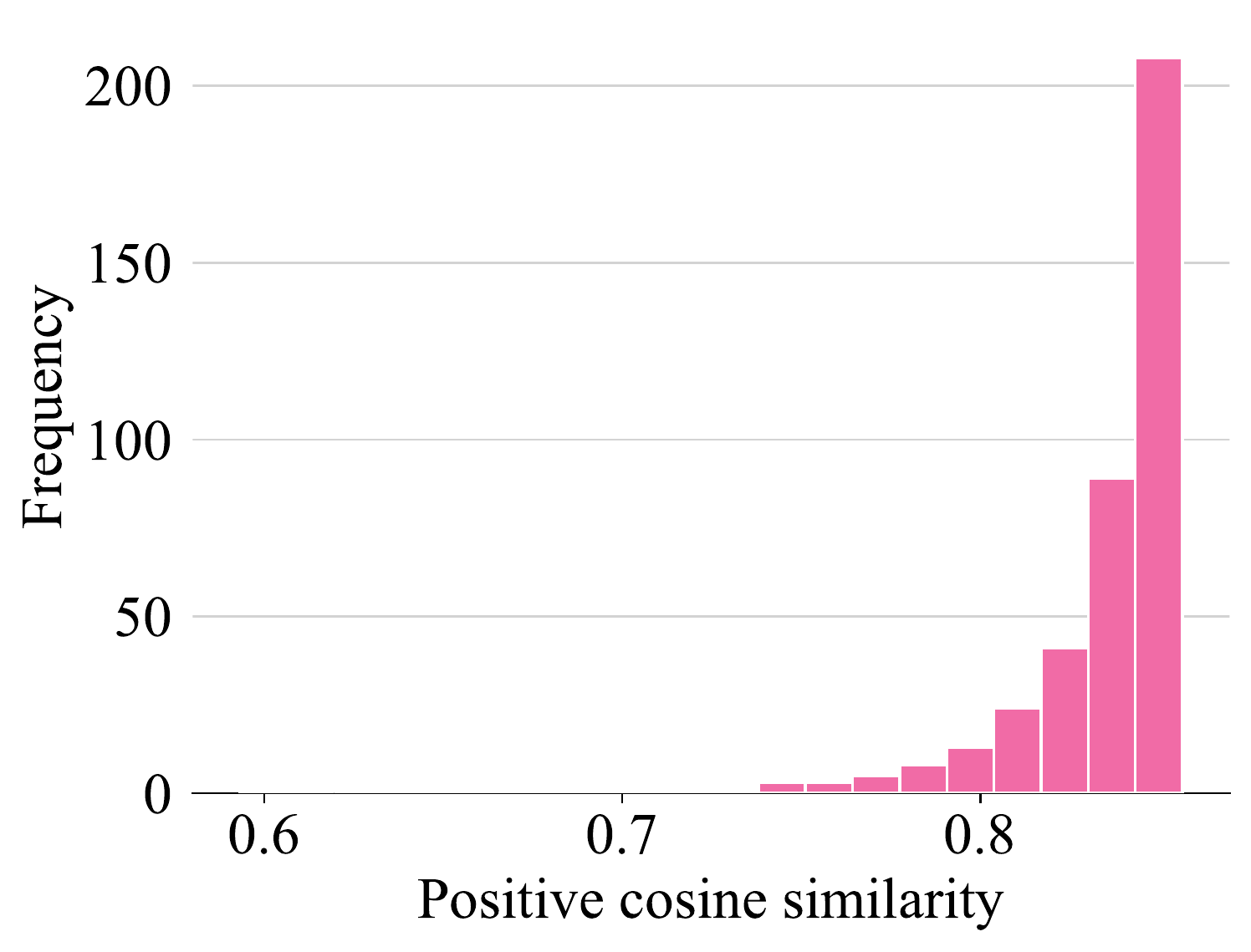}
    \end{minipage}%
    \begin{minipage}{0.35\textwidth}
        \centering       \includegraphics[width=\linewidth]{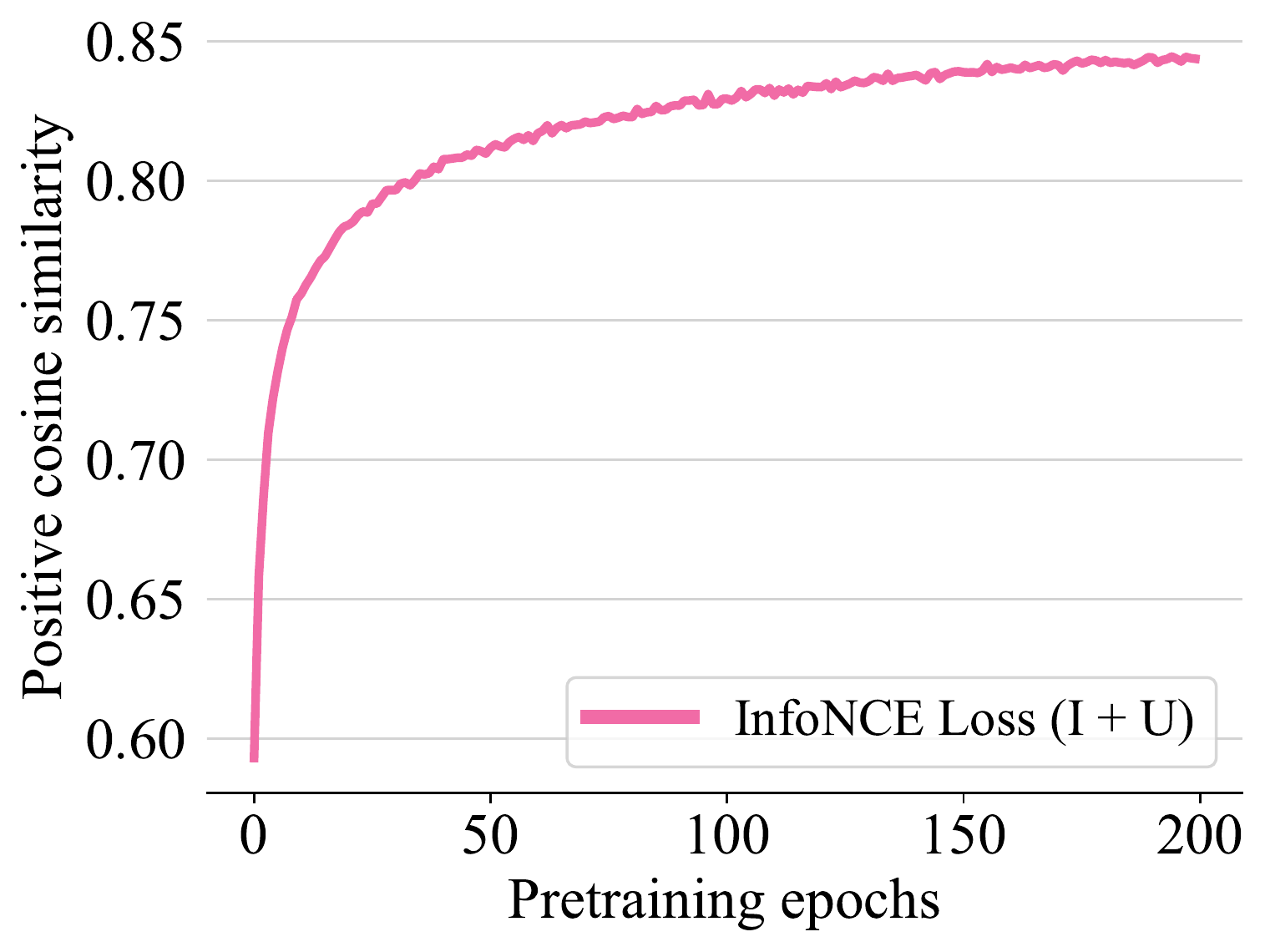}
    \end{minipage}
    \caption{(left) Histogram of positive cosine similarity values at the end of pre-training using the InfoNCE (invariance + uniformity) loss; (right) Evolution of positive cosine similarity values over pre-training epochs using the InfoNCE loss}
    \label{fig:plot5}
\end{figure}

\begin{figure}[htbp]
    \centering
    \begin{minipage}{0.35\textwidth}
        \centering
        \includegraphics[width=\linewidth]{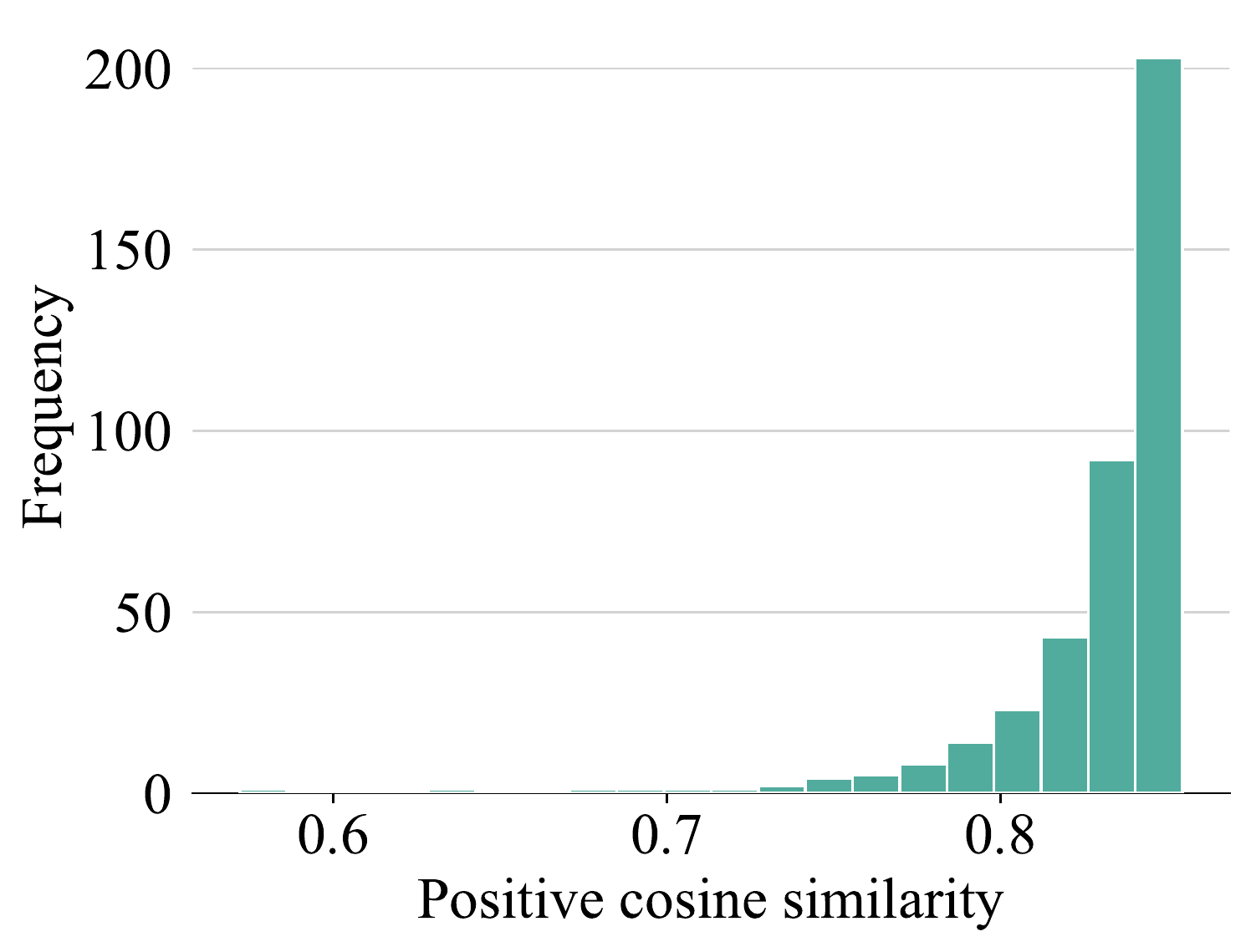}
    \end{minipage}%
    \begin{minipage}{0.35\textwidth}
        \centering       \includegraphics[width=\linewidth]{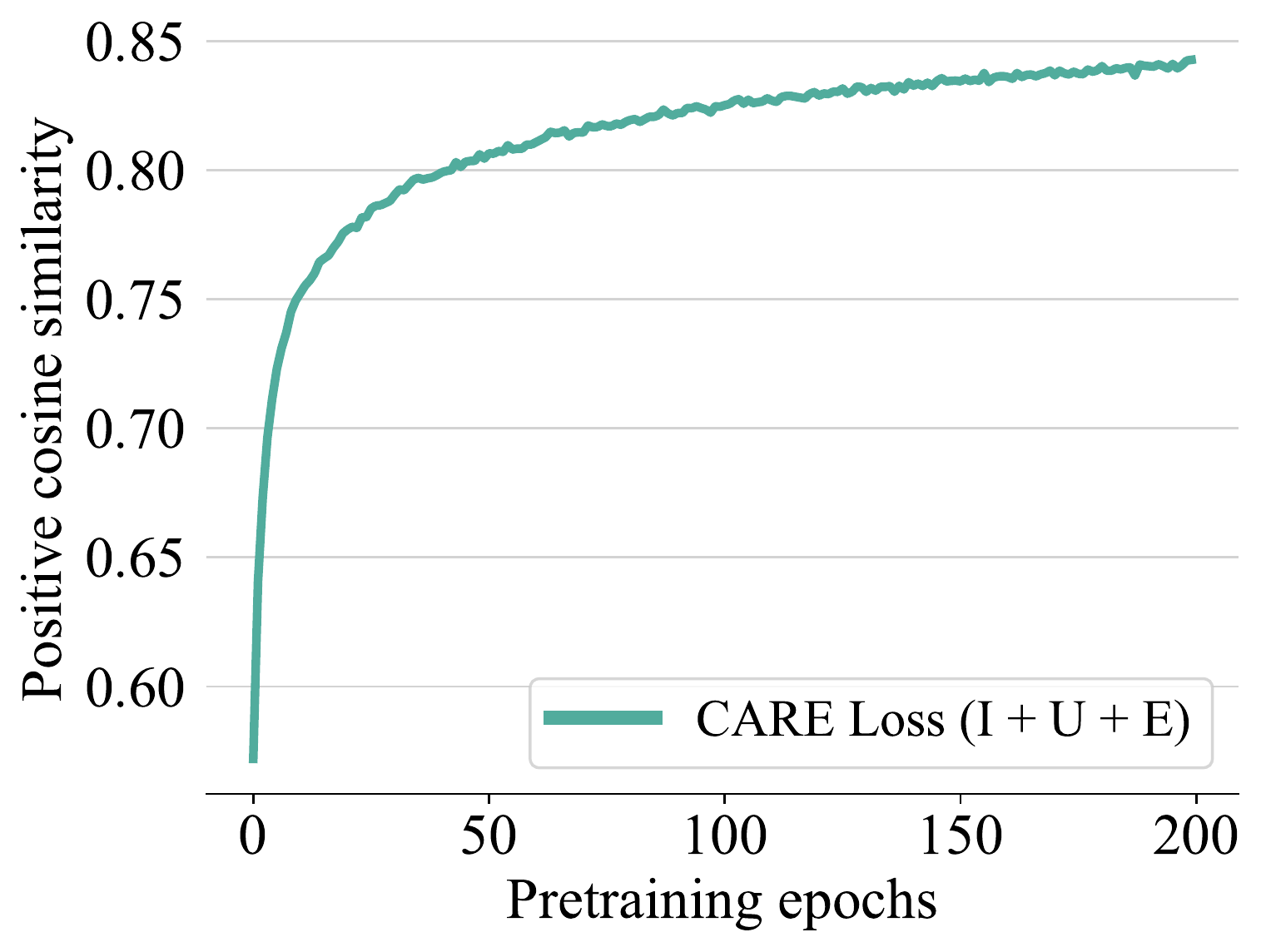}
    \end{minipage}
    \caption{(left) Histogram of positive cosine similarity values at the end of pre-training using the \care (InfoNCE + orthogonal equivariance) loss; (right) Evolution of positive cosine similarity values over pre-training epochs using the \care loss}
    \label{fig:plot6}
\end{figure}

\section{Additional discussion}\label{app: further discussion}

\textbf{Limitations.} While our method, \care, learns embedding spaces with many advantages over prior contrastive learning embedding spaces, there are certain limitations that we acknowledge here. First, we do not provide a means to directly identify the rotation corresponding to a specific transformation. Instead, our approach allows the recovery of the rotation by solving Wahba's problem. However, this requires solving an instance of Wahba's for each augmentation of interest.  Future improvements that develop techniques for quickly and easily (i.e., without needing to solve an optimization problem) identifying specific rotations would be a valuable improvement, enhancing the steerability of our models. Second, it is worth noting that equivariant contrastive methods, including \care, only achieve approximate equivariance. This is a fundamental challenge shared by all such methods, as it is unclear how to precisely encode exact equivariance. The question remains open as to a) whether this approximate equivariance should be considered damaging in the first place, and if so, b) whether scaling techniques can sufficiently produce reliable approximate equivariance to enable the diverse applications that equivariance promises. Addressing this challenge is a crucial area for future research and exploration in the field. Each of these limitations points to valuable directions for future work. 

\noindent\textbf{Broader impact.} 
Through our self-supervised learning method \care we explore foundational questions regarding the structure and nature of neural network representation spaces. Currently, our approaches are exploratory and not ready for integration into deployed systems. However, this line of work studies self-supervised learning and therefore has the potential to scale and eventually contribute to systems that do interact with humans. In such cases, it is crucial to consider the usual safety and alignment considerations. However, beyond this, \care, offers insights into algorithmic approaches for controlling and moderating model behavior. Specifically, \care identifies a simple rotation of embedding space that corresponds to a change in the attribute of the data. In principle, this transformation could be used to "canonicalize" data, preventing the model from relying on certain attributes in decision-making. Additionally, controlled transformations of embeddings could be used to debias model responses and achieve desired variations in output. It is important to note that while our focus is on the core methodology, we do not explore these possibilities in this particular work.

\end{document}